\newtheorem{theorem}{Theorem}
\newcommand*{\acse}{\text{ACSEmployment}\xspace}
\newcommand*{\acsh}{\text{ACSPublicCoverage}\xspace}
\newcommand*{\acst}{\text{ACSMobility}\xspace}
\title{Spending Privacy Budget Wisely and Fairly}
\author{
Lucas Rosenblatt$^1$\footnote{Contact Author}\and
Joshua Allen$^2$\And
Julia Stoyanovich$^{1}$
\affiliations
$^1$New York University, 
$^2$Microsoft
}
\begin{document}

\maketitle

\begin{abstract}
Differentially private (DP) synthetic data generation is a practical method for improving access to data as a means to encourage productive partnerships. One issue inherent to DP is that the ``privacy budget'' is generally ``spent'' evenly across features in the data set. This leads to good statistical parity with the real data, but can undervalue the conditional probabilities and marginals that are critical for  predictive quality of synthetic data.  Further, loss of predictive quality may be non-uniform across the data set, with subsets that correspond to minority groups potentially suffering a higher loss.  

In this paper, we develop ensemble methods that distribute the privacy budget ``wisely'' to maximize predictive accuracy of models trained on DP data, and ``fairly'' to bound potential disparities in accuracy across groups and reduce inequality.  Our methods are based on the insights that feature importance can inform how privacy budget is allocated, and, further, that per-group feature importance and fairness-related performance objectives can be incorporated in the allocation.  These insights make our methods tunable to social contexts, allowing data owners to produce balanced synthetic data for predictive analysis.



\end{abstract}

\section{Introduction}

Methodologies from data science and machine learning have become pervasive in recent years, leading to tremendous progress 
across disciplines. Common to most of these applications is a reliance on data for model creation and evaluation. Often, especially when the data in question stems from a social context, models in deployment may risk adversely affecting an individual member of a sample population. These adverse effects can take on many forms: for example, by treating individuals unfairly or by violating their privacy. 

Generally agreed upon ethics and country-specific laws govern what constitutes a ``fair'' algorithmic decision making process. Some ML models have been shown to directly discriminate against an individual in a population based on their race or gender~\cite{propublica,DBLP:journals/bigdata/Chouldechova17,obermeyer}, breaking ethical and regulatory norms. Motivated by these findings, a great deal of research on fairness in ML has worked to measure and address these offenses, and to quantify the inherent tradeoffs. 

Ethics and laws also protect the rights of individual data providers by ensuring a high level of privatization. 
The differentially private promise, if used correctly, ensures that any inferences conducted on data do not reveal whether a single individual's information (including, for example, their gender or race) was included in the data for analysis \cite{dwork2006calibrating}. 
Differential privacy (DP) can both prevent leakage of an individual's data and allow their sensitive attributes to be used during model training, supporting a balanced algorithmic decision making process \cite{jagielski2019differentially}.  

{\bf Challenges.} A practical method of operationalizing DP is through the generation of differentially private synthetic data~\cite{dwork2009complexity,hardt2010simple,torkzadehmahani2019dp,rosenblatt2020differentially,vietri2020new,mckenna2021winning}. Here, the goal is to develop general purpose DP data synthetizers that perform well across a number of metrics, with high statistical and predictive utility. However, this has proven to be a difficult task. One  issue inherent to DP is that the ``privacy budget,'' or $\epsilon$ value, is generally ``spent'' (or distributed) evenly across features in the data set. This leads to good statistical parity with the real data, but can often undervalue the conditional probabilities and marginals that are critical for preserving the predictive quality of the data.  Further, utility loss may be non-uniform across subsets of a data set, with subsets that correspond to minority or historically under-represented groups potentially suffering a higher loss. 

\begin{table}[t!]
\centering
\small 
\caption{Predicting binary employment status on \acse (state=CA), where groups with existing imbalance (data contains $61.4$\% white and $4.8$\% black individuals) are impacted adversely by privatization in data release.  DP synthesizers MST and FSQ-Bal (our method), with $\epsilon=e^3$, average over 10 runs.}
\begin{tabular}{lccccc}
\toprule
 & & \multicolumn{3}{c}{Accuracy / FNR (\%)} \\  
Group  & Base rate (\%) & Real & MST & FSQ-Bal \\
\midrule
 Overall  & 45.6 & 74.3 / 12 &  65.2 / 10 & 72.0 / 9  \\
 White    & 45.8 & \textbf{74.5 / 12}  & \textbf{65.6 / 10}  & \textbf{72.4 / 9}  \\
 Black    & 39.3 & \textbf{71.8 / 16}  & \textbf{58.6 / 21}  & \textbf{66.8 / 9}    \\
\bottomrule
\end{tabular}
\label{tab:example}
\vspace{-0.1in}
\end{table}

As an example, consider the task of predicting (binary) employment status on the \acse data set~\cite{ding2021retiring}.  Table~\ref{tab:example} shows prediction accuracy of a logistic regression classifier on real data and on DP synthetic data generated by MST~\cite{mckenna2021winning}, a state-of-the-art synthesizer.  Observe that accuracy on the real data is 74.3\% overall, and that it's 2.7\% higher for the white group, which constitutes the majority of the data set, compared to the black group.  Accuracy of the model trained on MST-generated data is substantially lower at 65.1\% overall,  with an exacerbated 7\% disparity in the accuracy between the two racial groups.

{\bf Key ideas.} In this paper, we develop ensemble synthetic data methods that distribute the privacy budget ``wisely'' to maximize predictive accuracy, and ``fairly'' to bound potential disparities in accuracy across sub-populations.  Our methods are based on three key insights.  The first is that feature importance can inform how the privacy budget is allocated to individual features. The second is that fairness-related performance objectives 
can be incorporated into that budget allocation. The third is that, given a partial segmentation of data synthesis into tunable models for standalone features, 
one can optimize for a specific accuracy metric, while encouraging desirable predictive properties of synthetic data for sub-populations. 

For example, in the employment scenario, a false negative is considered particularly harmful to an individual. We thus may want to minimize FNR overall while equalizing FNR between groups, all while maximizing predictive fidelity of synthetic data. Table~\ref{tab:example} shows accuracy and FNR for one of the methods we propose, FSQ-Bal, where we achieve high overall accuracy of 72\%, approaching accuracy on real data, while lowering FNR overall and for both groups to 9\%, outperforming real data on this metric. 

{\bf Social impact and relevance.} Procedures for generating DP synthetic data that explicitly consider protected classes and fairness 
must be developed.
Otherwise, as is well documented~\cite{ganev2021robin}, and as we showed in our example in Table~\ref{tab:example}, DP synthesizers will continue to  have an adverse effect on underprivileged groups. The approach we take in this work leads to DP synthesizers that are interpretable and tunable to social contexts, allowing data owners to produce balanced synthetic data for predictive analysis. 

\section{Overview of Our Approach}
\begin{figure}[!htb]
\centering
\includegraphics[width=7cm]{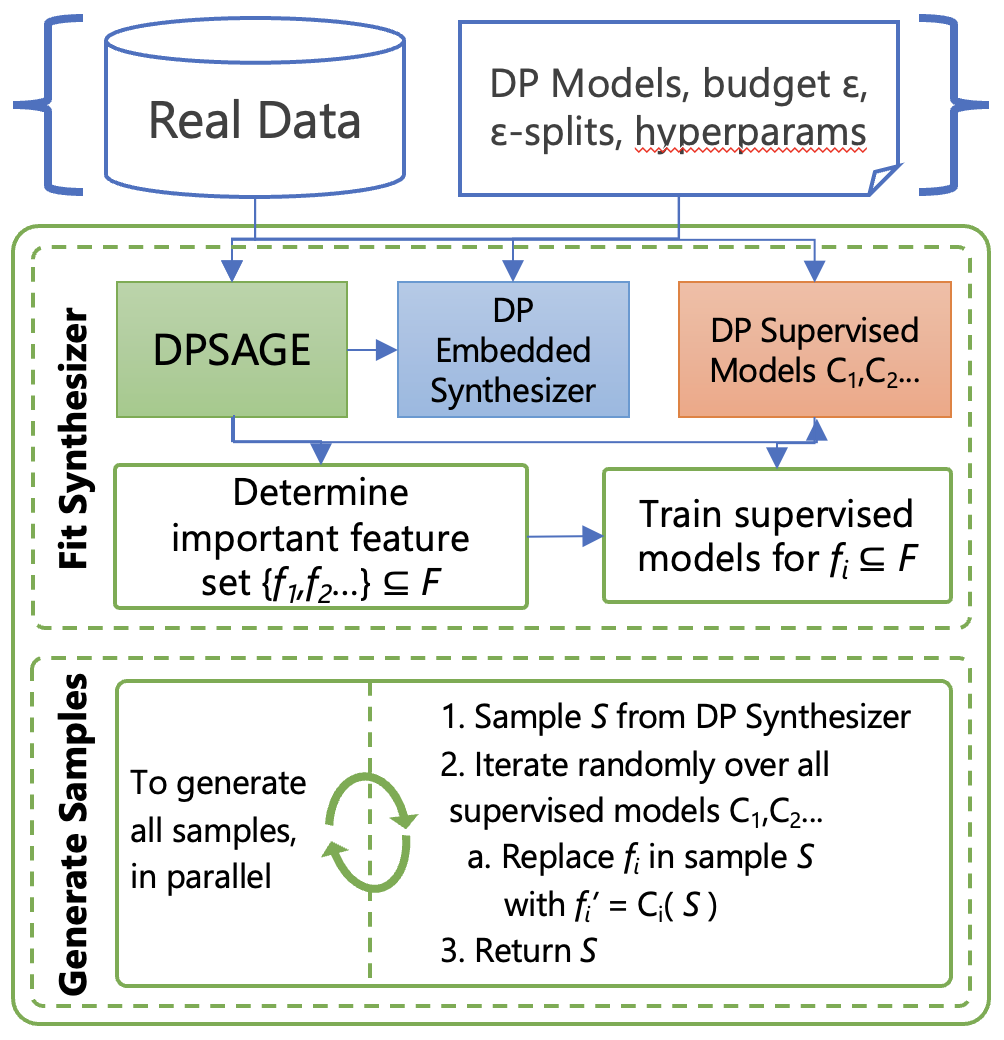}
\caption{The architecture of SuperQUAIL, outlining major components and summarizing the iterative approach to sampling.}
\label{fig:sqdiagram}
\vspace{-0.1in}
\end{figure}

{\bf Challenges in DP synthetic data generation.} It is well known that generating DP synthetic data, though promising for many tasks, suffers from an array of computational challenges. One can generate a synthetic dataset with stringent privacy guarantees, and that dataset can preserve complex marginal distributions across many attributes. However, a variant of the ``curse of dimensionality'' inherently limits practical implementations of these theoretically strong synthesizers  \cite{hardt2010simple,ullman2011pcps,bowen2020comparative}. Much of the recent work on DP data synthesis has involved balancing trade-offs between fidelity to real data and practical predictive performance \cite{vietri2020new,mckenna2021winning,tao2021benchmarking}. 

Past work on DP synthetic data focused on privatized Bayesian networks and conditional GANs \cite{zhang2017privbayes,torkzadehmahani2019dp,jordon2018pate}, while recent work has proposed new approaches to ``querying'' from real data with improved methods for incorporating queries into a mock-distribution from which one can sample \cite{arnold2020really,bowen2020comparative}. Our approach falls into the latter category: we offer an ensemble method that treats the results of DP classification algorithms as ``queries'' to improve the fidelity of independently drawn samples, augmenting an existing DP data synthesis method. 

However, most existing algorithms offer unsatisfactory explorations of the social contexts where they are most often employed. Given that DP serves to rigorously preserve privacy rights for individuals, we can safely assume that the majority of use will be in high-stakes scenarios, such as the US Census \cite{abowd2018us}. The societal impact and potential harms of DP methods, then, need attention. Important existing work details ``fair'' DP approaches to classification \cite{jagielski2019differentially,ding2020differentially}, specifically a compelling post-processing method that utilizes privacy budget to ``repair'' inequities introduced by DP noise \cite{pujol2020fair}. We believe that similar work must be extended to address adverse impact and potential predictive harms to minority groups in DP synthetic data, an unfortunate byproduct of existing synthesizers, explored in depth by \cite{ganev2021robin}. We propose our methods, SuperQUAIL and its fair derivatives, as tools to ameliorate synthetic data harms without sacrificing much data utility.

{\bf Overview of SuperQUAIL.} Our work leverages existing methods of generating DP data and feature importance in order to generate high quality and balanced tabular synthetic data for an {\it a priori} known supervised learning task. Figure~\ref{fig:sqdiagram} gives an overview of SuperQUAIL.  The method operates in two steps: {\bf (1)} fit the synthesizer , ``wisely'' allocating the privacy budget to different features to optimize for predictive utility of the DP sample and {\bf (2)} generate samples to ensure that a ``fair'' (in the sense of performance across groups) classifier can be trained on the DP sample.

To support step {\bf (1)}, we present DPSAGE, an intuitive DP modification to SAGE~\cite{DBLP:conf/nips/CovertLL20}, a popular explainability framework, and show that it reasonably mimics importance values drawn from real (non-privatized) data at various $\epsilon$ levels. With a feature ranking from DPSAGE, we can then improve the allocation of our privacy budget \textit{within} our synthesizer, creating data tuned to a specific task. We do this by iteratively ``querying'' DP classifiers, each targeting one of the important predictive features from DPSAGE, to improve samples generated by an embedded DP synthesizer. We describe this process in detail in Section~\ref{sec:wise}.

To support step {\bf (2)}, through an intuitive modification to the SuperQUAIL method, we demonstrate how to tune the \text{fairness} of the resulting synthetic data with respect to protected classes (e.g., race), thus allowing for supervised learning models with reduced group disparities to be trained on the synthetic data. 
We describe this process in detail in Section~\ref{sec:fair}.

\vspace{-0.2cm}
\section{SuperQUAIL}
\label{sec:wise}
SuperQUAIL embeds a DP feature importance method, a DP synthesizer (we use MST), and DP classifiers into an ensemble model. It uses the feature importance values to assign the privacy budget across these feature-respective classifiers, as well as the target feature, thus prioritizing the preservation of a specific ``marginal'' (namely, between the target variable and the most important features for target prediction). We show SuperQUAIL in Algorithm~\ref{alg:sq} and discuss it below.
\subsection{Feature Importance} 
\label{sec:methods:sq}
SAGE (Shapley Additive Global Importance) is a SHAP value based global feature importance measure. It is given by the following equation, where $v_{f^*}$ is the predictive power that some model $f$ derives from the subset of features $S \in D$, $d$ is the number of features, and $I(Y;X_i|X_S)$ denotes mutual information: predictive power of feature $X_i$ given all other features $X_S$ when predicting $Y$: 
\begin{small}
$$\phi(v_{f^*}) = \frac{1}{d} \sum_{S\in D \sim {i}} {d-1 \choose |S|}^{-1} I(Y;X_i|X_S)$$
\end{small}

Intuitively, this calculation means taking the weighted average of conditional mutual information (or how good some feature $X_i$ is at reducing uncertainty about prediction $Y$). Aggregating SHAP values in this way produces a global feature importance for each feature, but practically SAGE values are approximated by sampling real data to determine feature importance of a model trained on that data \cite{DBLP:conf/nips/CovertLL20,lundberg2017unified}. We chose this explainability model based on its state-of-the-art performance and its reliance on sampling (which made it straightforward to create a DP version). Some work exists on DP explainability \cite{harder2020interpretable,nori2021accuracy}, but these methods are tied directly to predictive modeling techniques. We sought a standalone feature importance method for integration into our synthesizer.

\begin{algorithm}[tb]
\small 
\caption{DP SAGE}
\label{alg:dpsage}
\textbf{Input}: Real data $D$, Untrained DP Sampler $S$ and DP supervised learning model $C$, SAGE models $PermSAGE$ and $ImpSAGE$ \\
\textbf{Parameter}: Budget $\epsilon$, split $\gamma$, model hyperparameters $H_S$ and $H_C$\\
\textbf{Output}: Feature importance $F$, trained $S$ and trained $C$ (so as not to waste budget)
\begin{algorithmic}[1] 
\STATE Perform $\epsilon$ privacy split, $\epsilon_S = \gamma * \epsilon$ and $\epsilon_C = 1 - (\gamma * \epsilon)$
\STATE Train $C(\epsilon_C, D, H_S)$. 
\STATE Initialize DP sampling method $S(\epsilon_S, D, H_C)$.
\STATE Initialize SAGE imputation method $ImpSage(S)$ 
\STATE Permutation estimator $PermSAGE(C, ImpSage(S))$
\STATE \textbf{return} Feature ranking $F$ and values $V_F$, also $S$, $C$
\end{algorithmic}

\end{algorithm}

For our DP feature importance method DPSAGE, shown in Algorithm~\ref{alg:dpsage}, we audit a DP supervised learning model. When estimating the SAGE values for that model, we use the standard permutation estimator and marginal imputer, but when sampling, we provide private samples, thus ensuring DP by composition (utilizing the combined privacy budget of both model and sampling technique). For our tests we use a DP logistic regression model~\cite{chaudhuri2011differentially} and MST for sampling \cite{mckenna2021winning}. We experimentally evaluate this method in Section~\ref{sec:exp}.

\subsection{Better Preserving Conditional Probabilities}
\cite{tao2021benchmarking} demonstrate that marginal-based methods for synthesis, which rely on some set of measurements of low-order marginals that fit a graphical model, have recently produced impressive results in both statistical and predictive utility. Specifically, the MST model, which uses HDMM \cite{mckenna2018optimizing}, relies on 2-way and 3-way marginals. However, given that the low-order marginals may struggle to capture complex feature relationships (i.e., higher-order marginals) due to a number of constraints including privacy budget, the predictive utility of the synthetic data may suffer when the task is difficult (i.e., complex conditioning on 4+ features). The main insight of this work, from a synthetic data perspective, is that one way to capture a high-order conditional probability is to use a DP predictive model. For this example, we will consider a logistic regression for ease of analysis, though random forests and other models produced similar results in our experiments.

The logistic regression model approximates a Bayes optimal decision boundary 
\begin{small}
$$\hat{y} = \underset{Y}{\arg\max} \text{ }Pr(Y=y | D)$$ 
\end{small}
with a linear decision boundary. Specifically, the logit function approximates the probability with: 
\begin{small}
$$Pr(Y=1 | D) = \frac{exp(\beta_0 + \sum_1^i \beta_i D_i)}{1+exp(\beta_0 + \sum_1^i \beta_i D_i)}$$ 
\end{small}
In a sense, this can be thought of as a marginal query, answering the following question: Given a sample of values for all but one dimension, what value does the target dimension take on with the highest probability? In essence, the DP version of logistic regression works by noising the objective function using the Laplace mechanism, proportional to the given $\epsilon$, thus approximating a full fidelity model \cite{chaudhuri2011differentially}.

What if, from a starting sample $S$, we iteratively generate (in random order) replacement values for a few specific dimensions of $S$ (including the target, or task specific, feature)? Might we be able to preserve a more complex conditional relationship than a 2-way or 3-way marginal? In an optimal setting, without noise, consider a hypothetical sample $S$, with sample features $X_S=i$, $Y_S=j$, $Z_S=k$ (loosely conditioned on each other), where we iterate over predictive models for $Y_S$, $X_S$. We could determine a value $y$ for replacement into the $Y_S$ feature of sample $S$ as follows: 
\begin{small}
\begin{align*}
y = &\underset{Y_S}{\arg\max}\text{ }  Pr(Y_S = y |  x = \nonumber\\ &~~~~~~~~\underset{X_S}{\arg\max}\text{ } Pr(X_S = x | Y_S=j, Z_S=k), Z_S=k)
\end{align*}
\end{small}

Note that this nested conditioning produces a value for a feature with scalably complex dependency. However, given our fixed $\epsilon$ privacy budget, and the expense of training individual predictors for each feature, we need to be careful about how we distribute the $\epsilon$ budget between features. This problem motivated our exploration into DPSAGE, which proved an effective method of selecting the top $k$ features for iterative conditioning without utilizing excess budget. Other methods of feature selection merit further study.

\begin{figure*}[t!]

\begin{subfigure}{\linewidth}
\centering
\includegraphics[width=.33\textwidth]{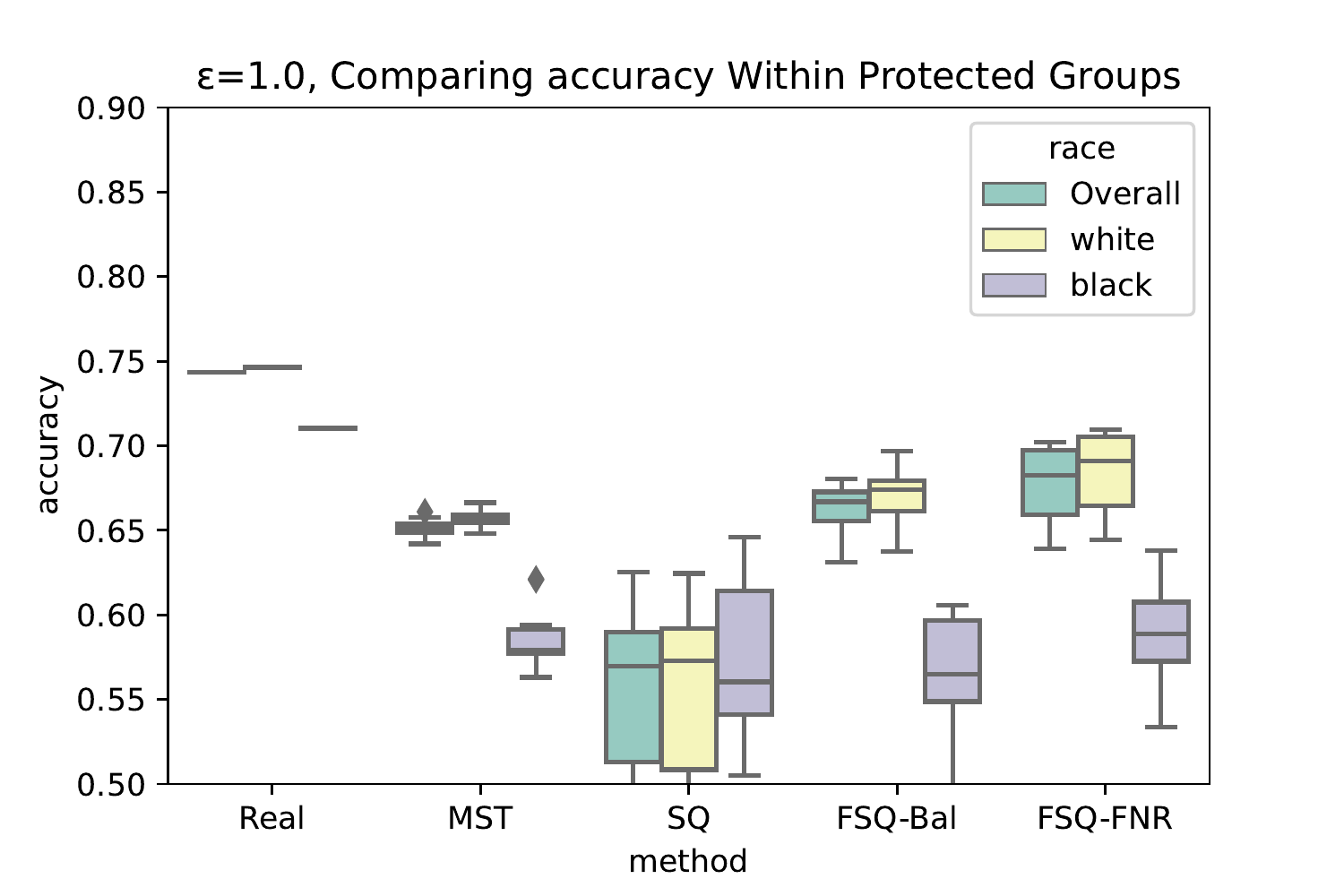}\hfill
\includegraphics[width=.33\textwidth]{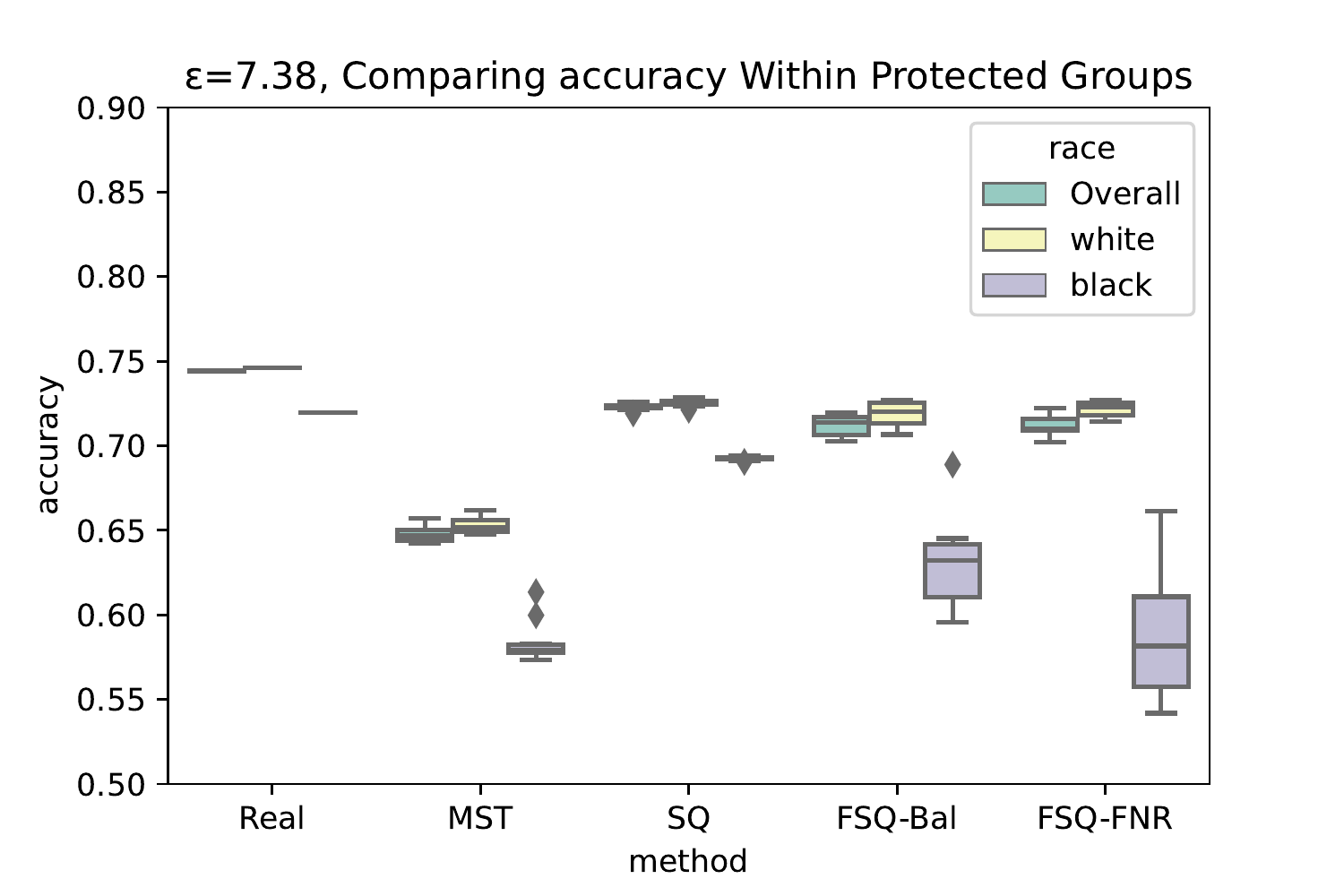}\hfill
\includegraphics[width=.33\textwidth]{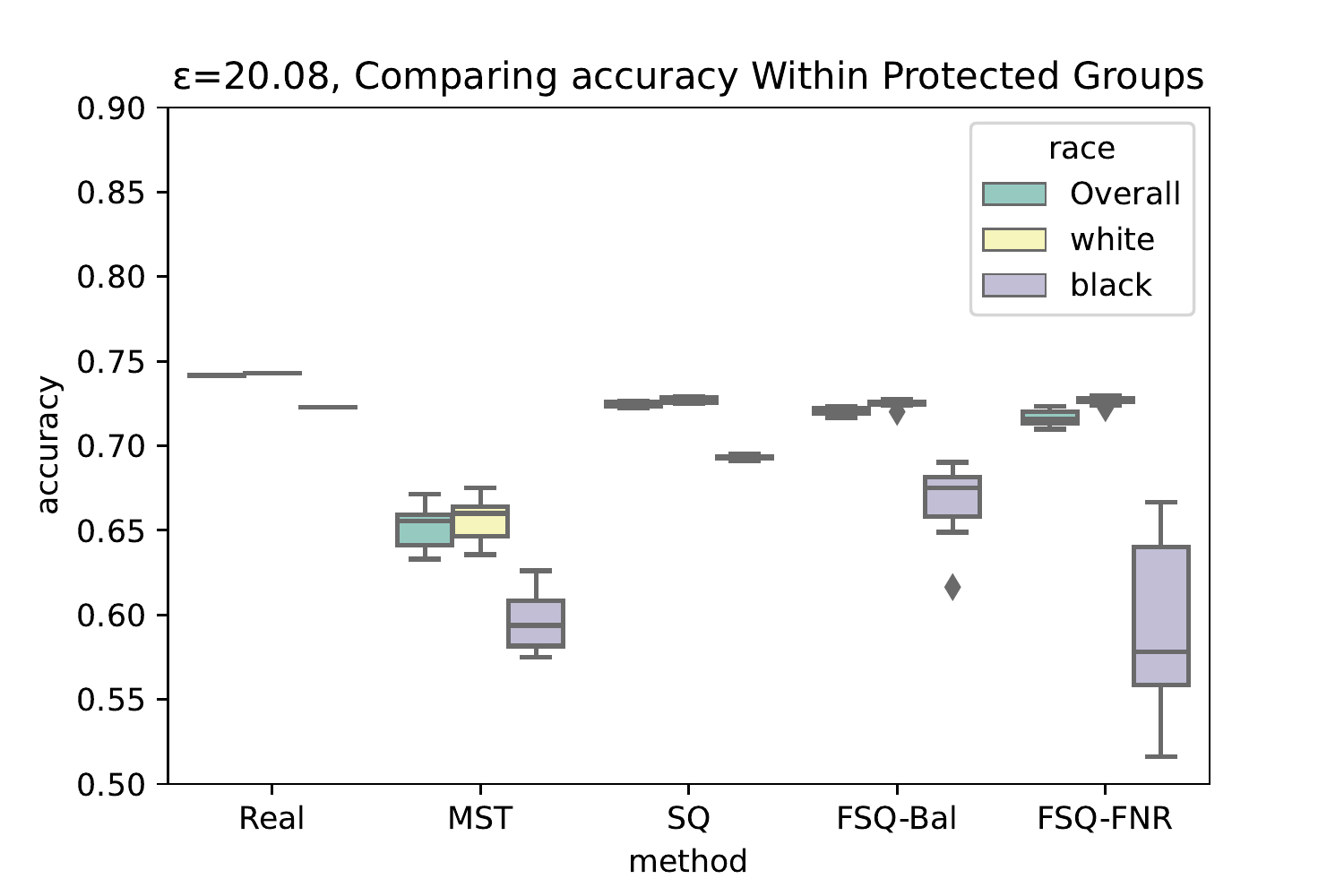}\hfill
\vspace{-0.1in}
\caption{Accuracy score at varying $\epsilon \in \{e^0,e^2,e^3\}$ on \acse.}
\label{fig:accuracy}
\vspace{-0.1in}
\end{subfigure}\par\medskip

\begin{subfigure}{\linewidth}
\centering
\includegraphics[width=.33\textwidth]{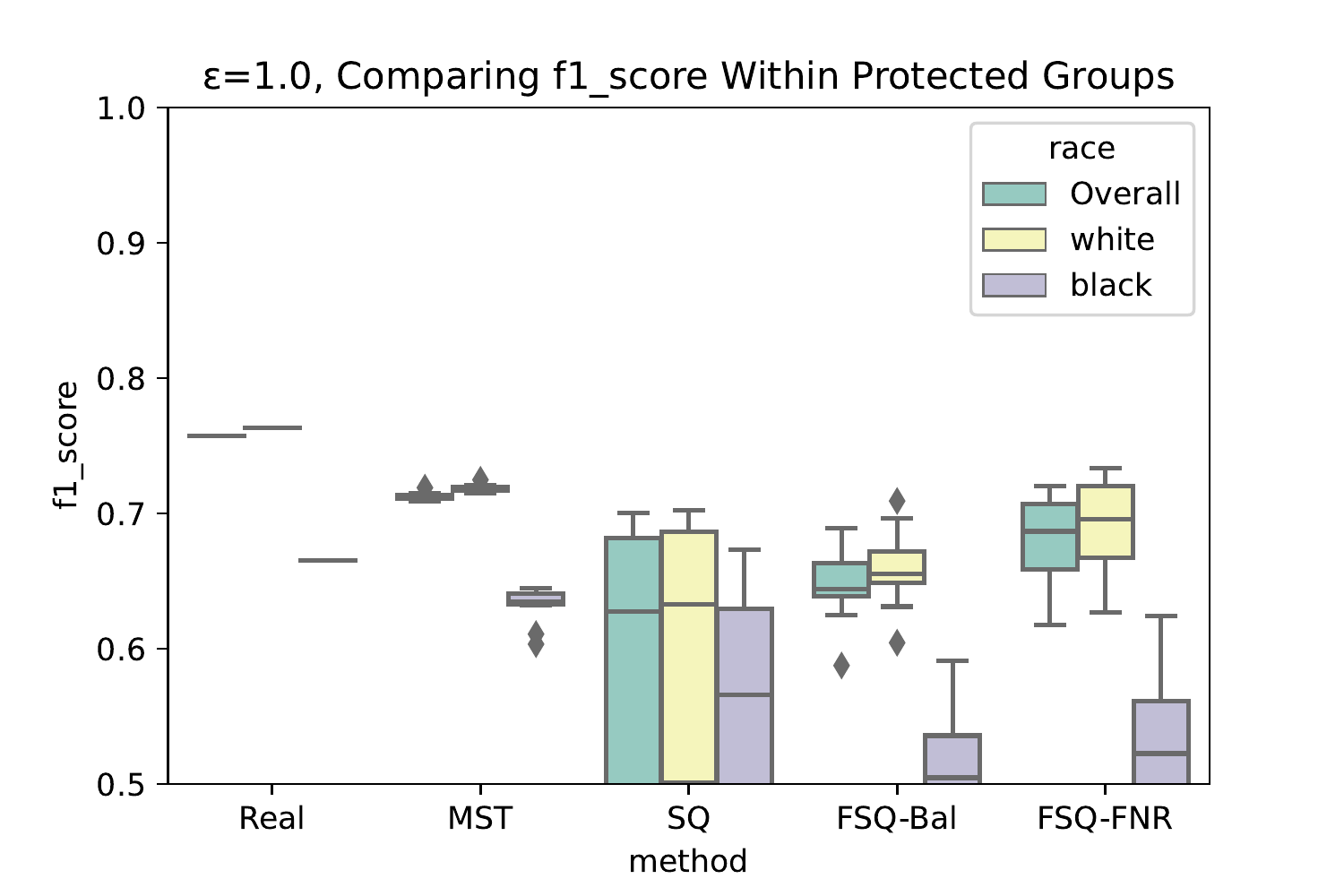}\hfill
\includegraphics[width=.33\textwidth]{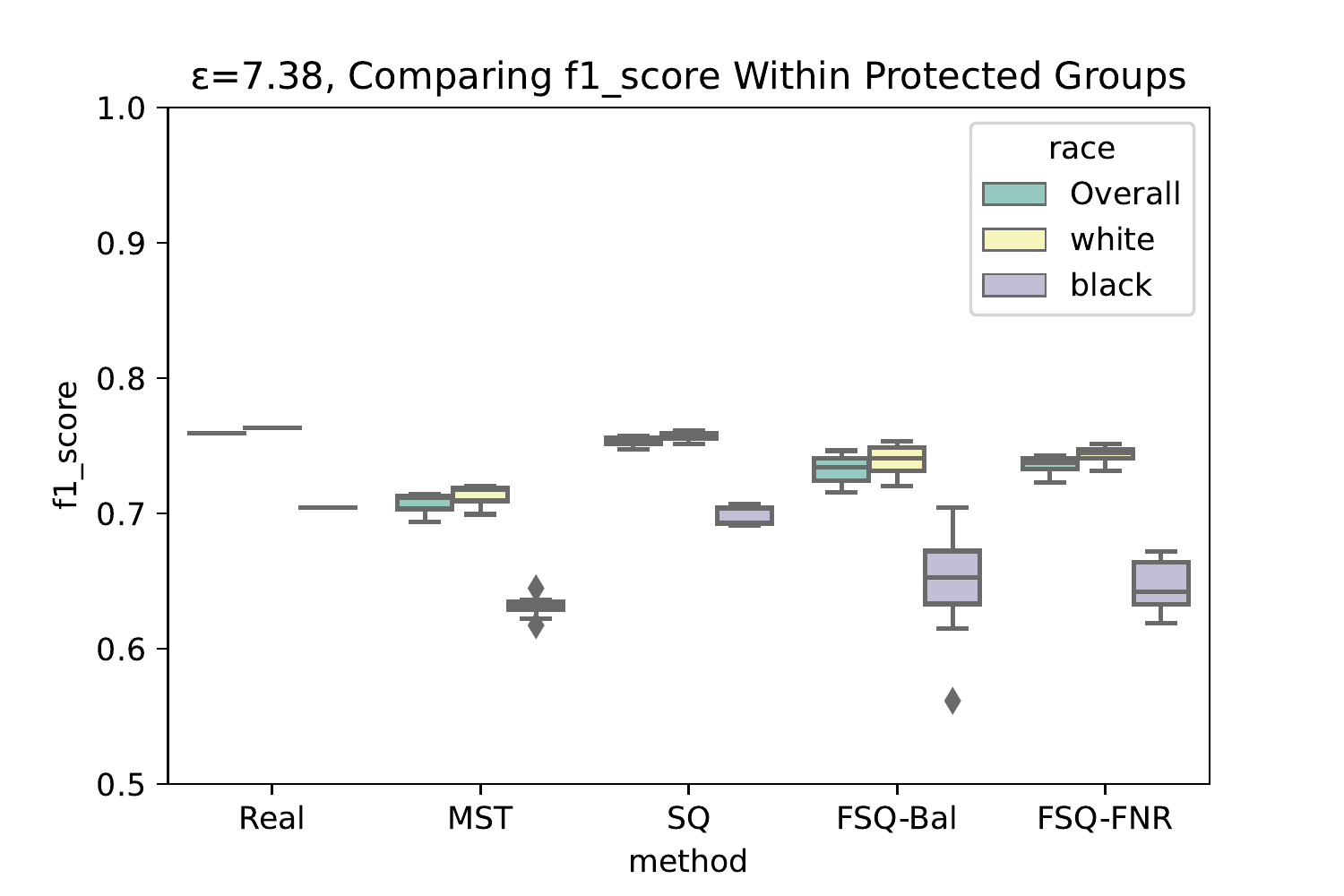}\hfill
\includegraphics[width=.33\textwidth]{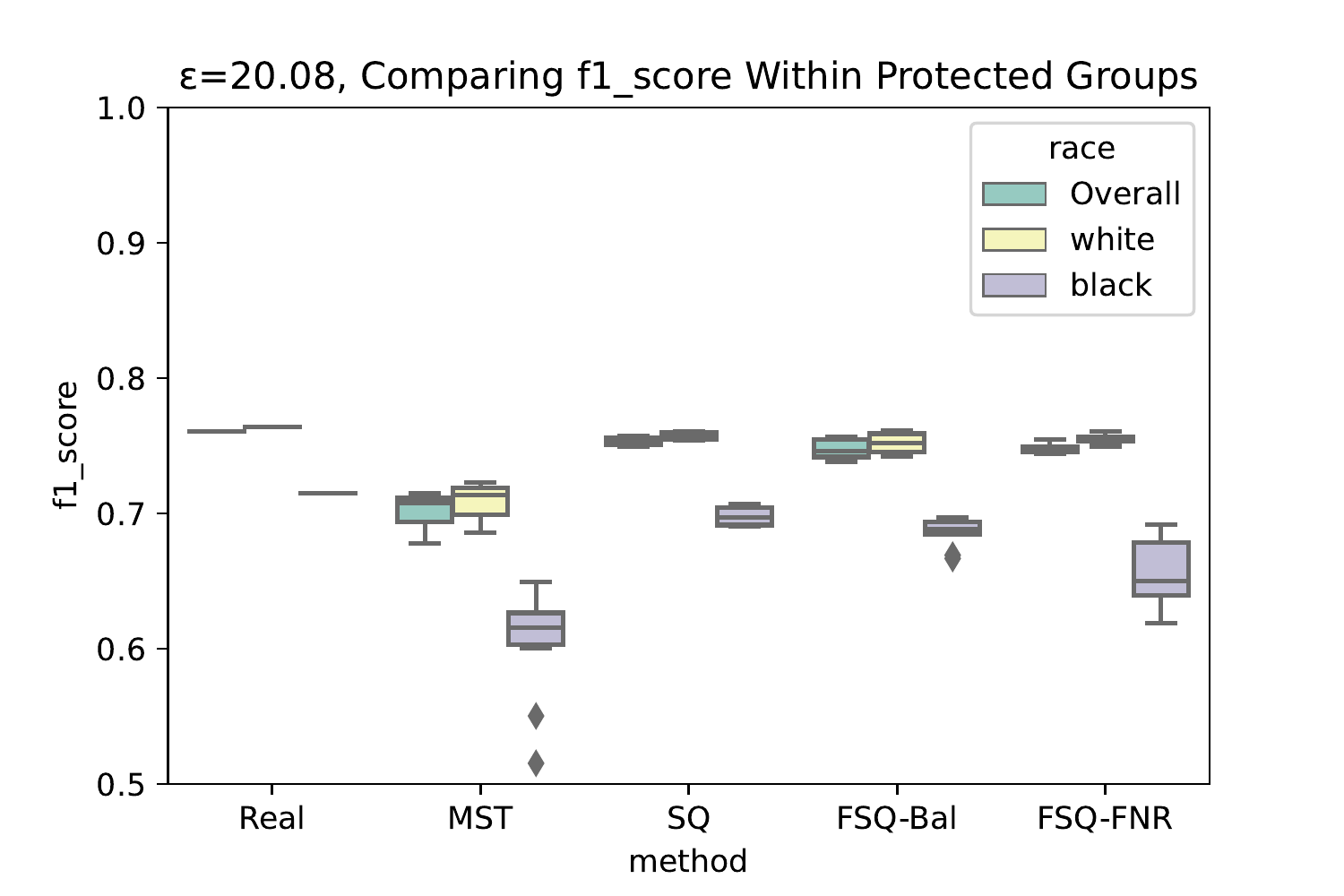}\hfill
\vspace{-0.1in}
\caption{F1 score at varying $\epsilon \in \{e^0,e^2,e^3\}$ on the \acse.}
\label{fig:f1}
\vspace{-0.1in}
\end{subfigure}\par\medskip

\begin{subfigure}{\linewidth}
\centering
\includegraphics[width=.33\textwidth]{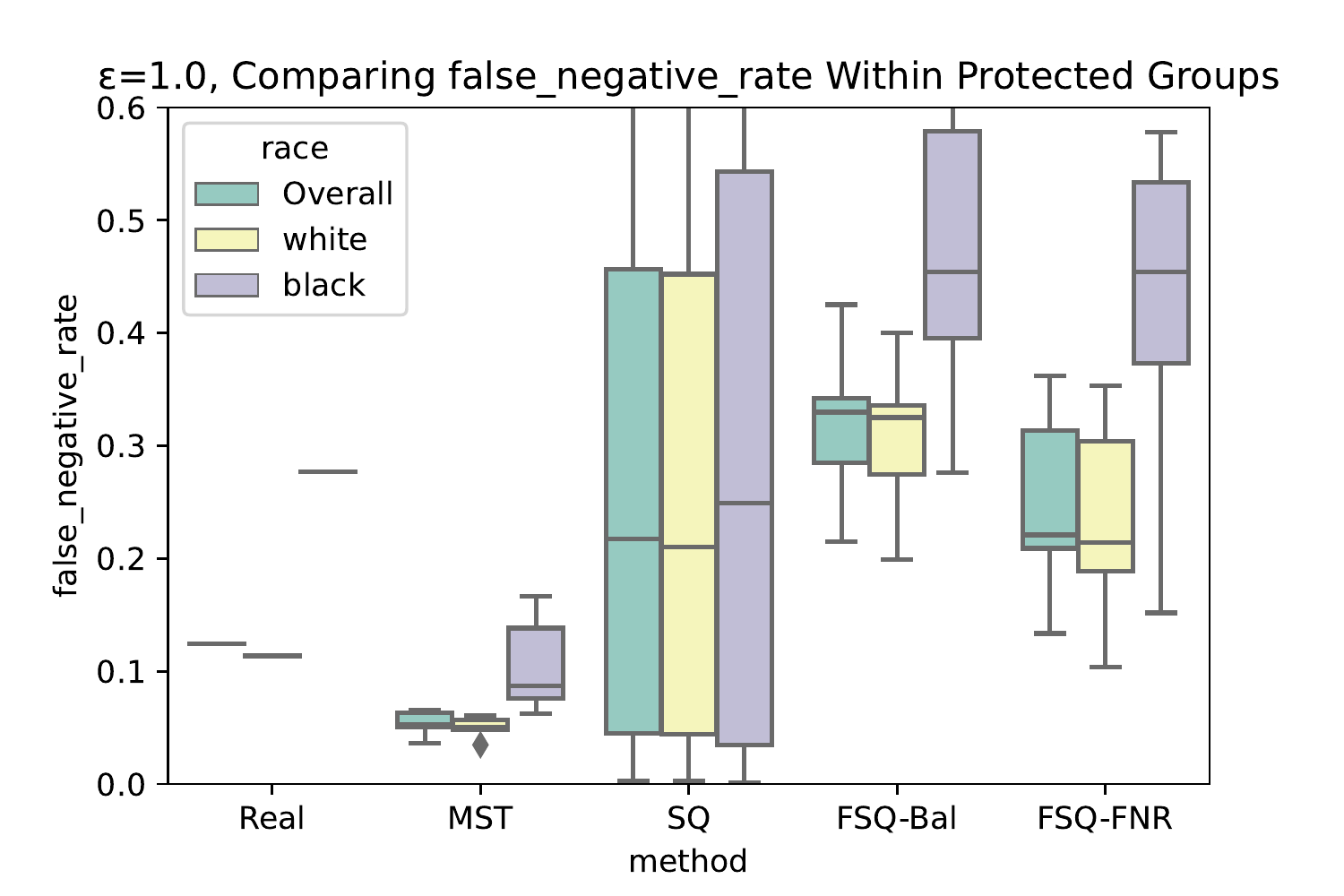}
\hfill
\includegraphics[width=.33\textwidth]{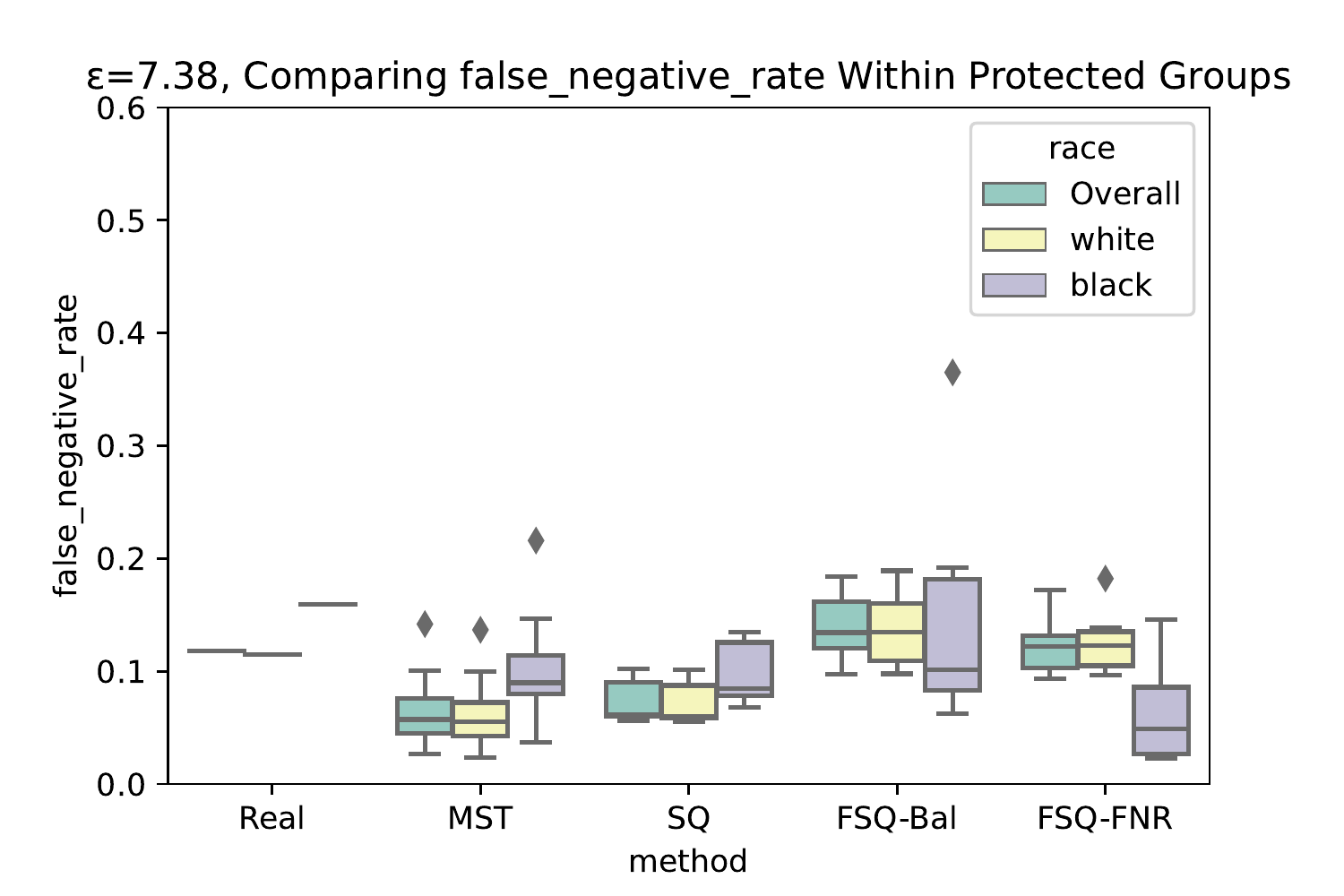}\hfill
\includegraphics[width=.33\textwidth]{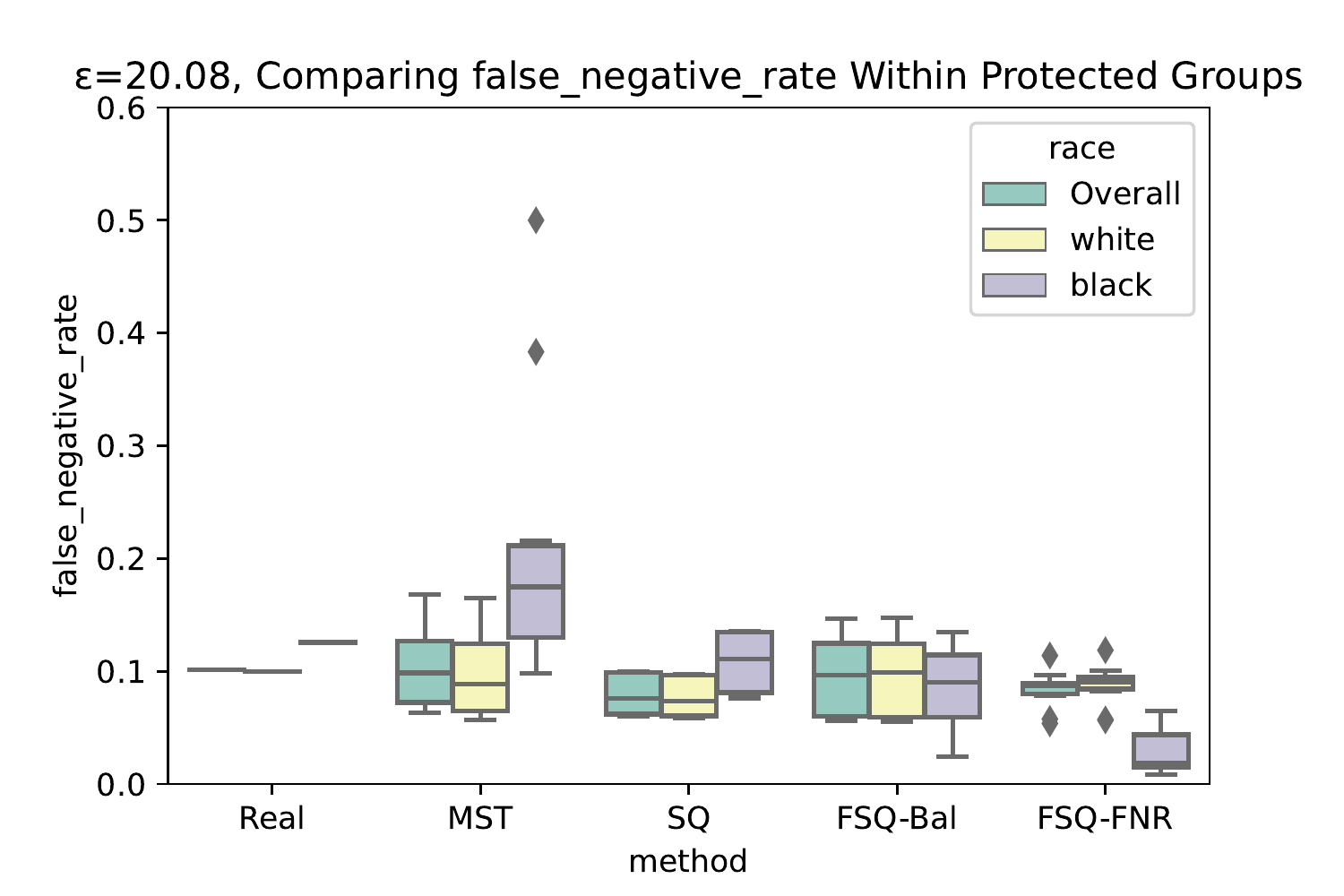}\hfill
\vspace{-0.1in}
\caption{False negative rate at varying $\epsilon \in \{e^0,e^2,e^3\}$ on \acse.}
\label{fig:fnr}
\vspace{-0.1in}
\end{subfigure}\par\medskip

\begin{subfigure}{\linewidth}
\centering
\includegraphics[width=.33\textwidth]{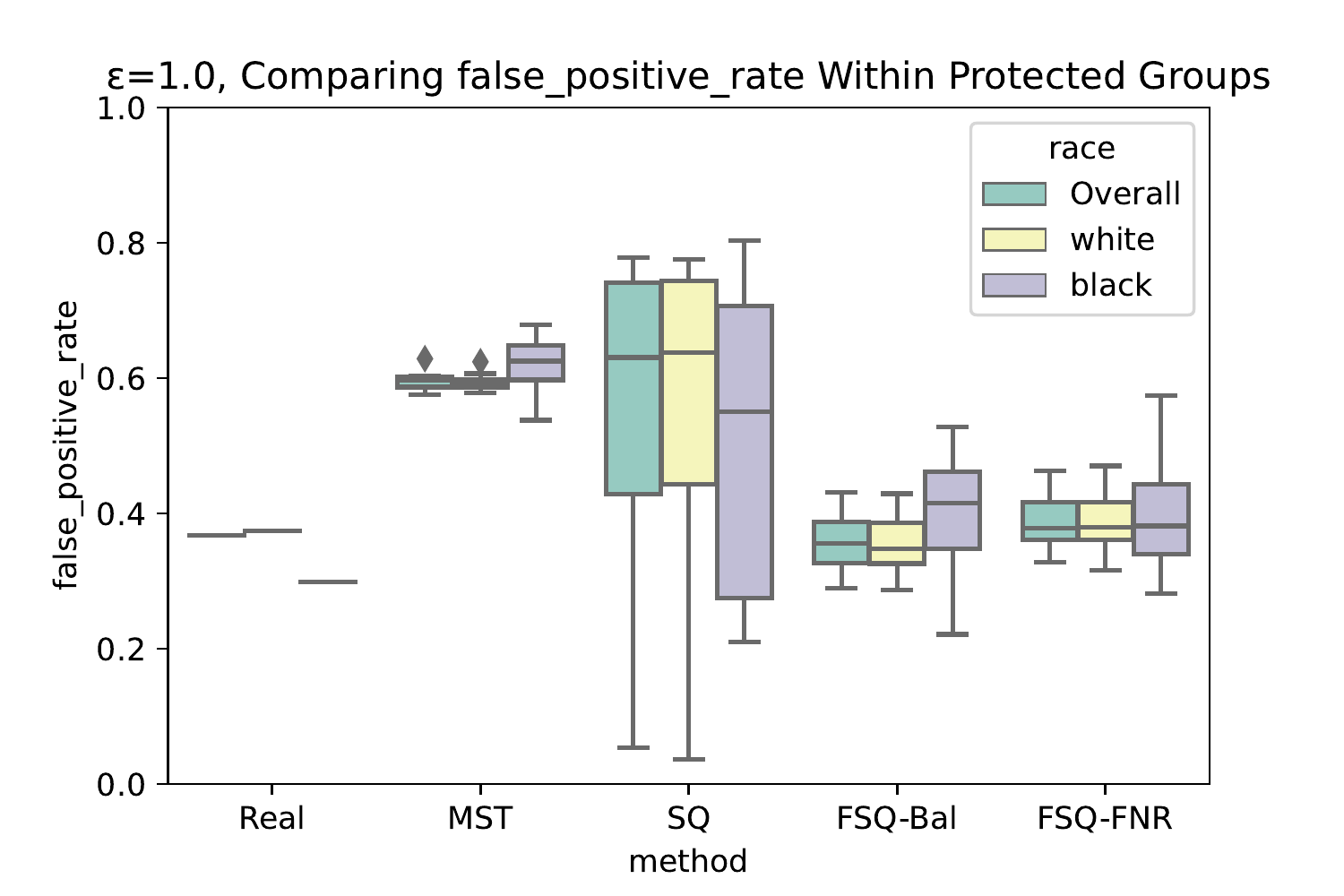}
\hfill
\includegraphics[width=.33\textwidth]{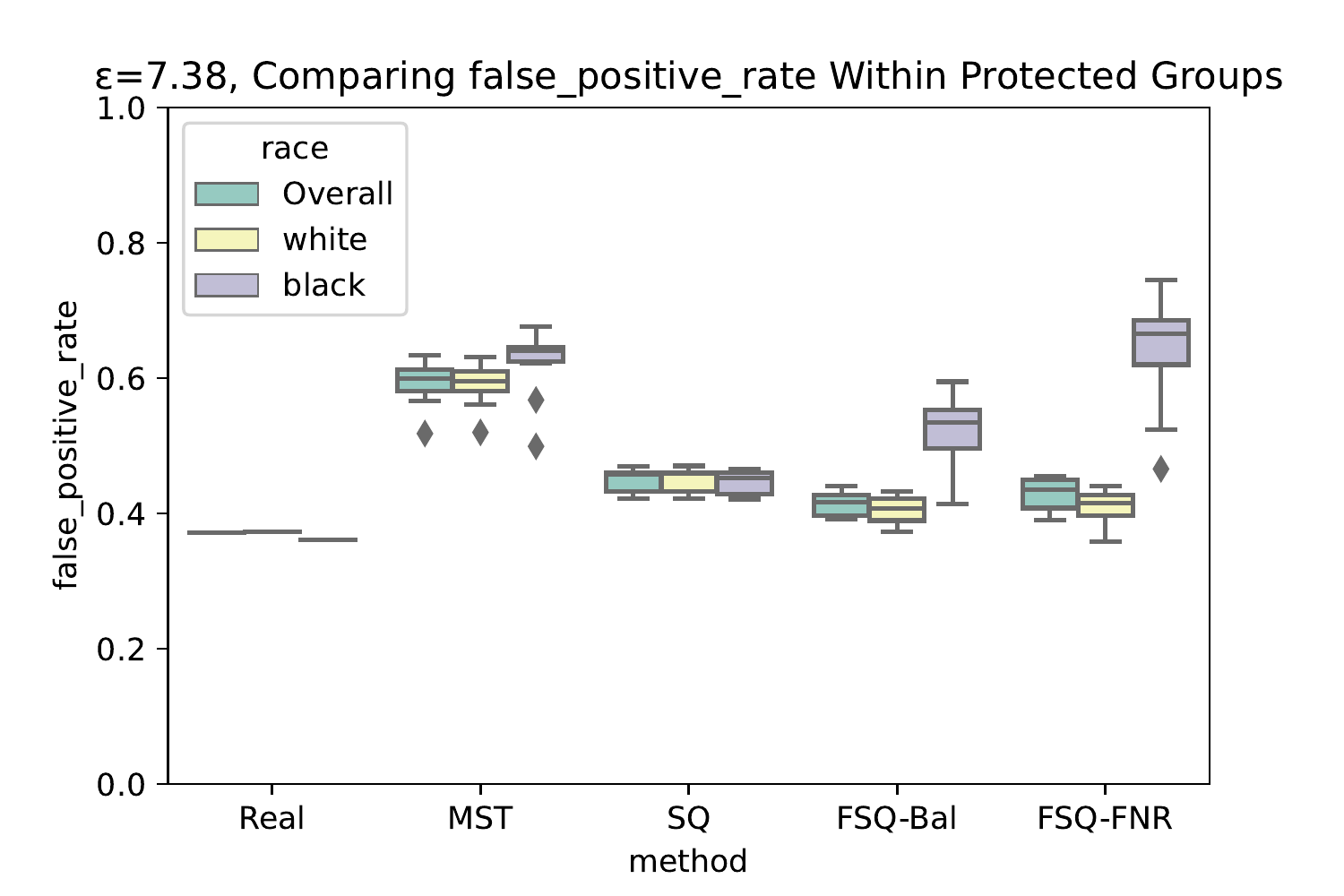}\hfill
\includegraphics[width=.33\textwidth]{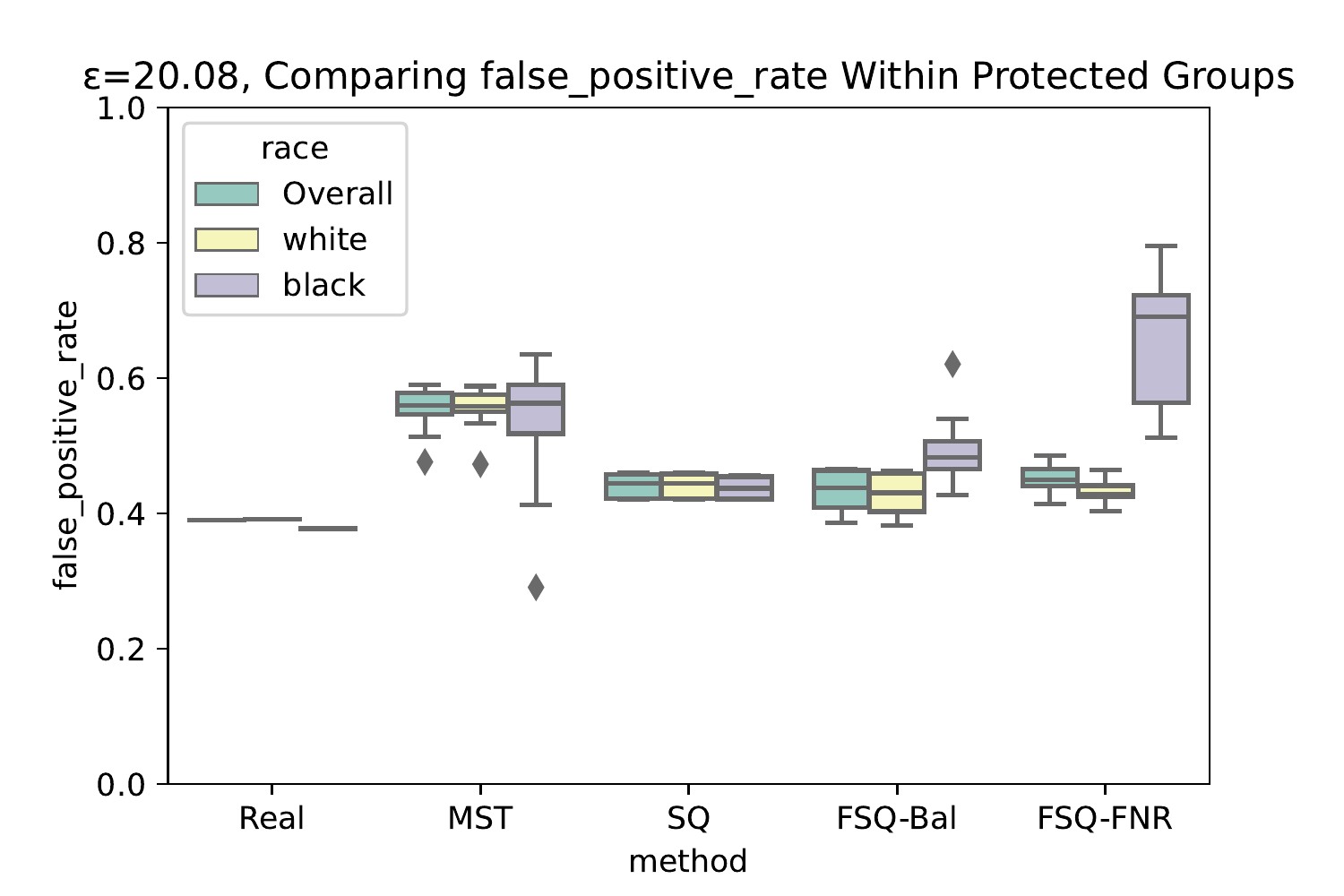}\hfill
\vspace{-0.1in}
\caption{False positives rate at varying $\epsilon \in \{e^0,e^2,e^3\}$ on \acse.}
\label{fig:fpr}
\vspace{-0.1in}
\end{subfigure}\par\medskip
\end{figure*}

\subsection{Privacy and Budget}
It is quite difficult to determine a ``reasonable'' $\epsilon$ budget for a DP mechanism, and in practice what constitutes a reasonable level of privacy varies by context \cite{dwork2019differential}. ``Small'' $\epsilon$ values (canonically, $\epsilon \leq 1$) tend to exhibit great privacy guarantees but often severely impact performance \cite{lee2011much}. ''Medium'' and ''large'' $\epsilon$ values (canonically, $\epsilon \in \{1,10\}$ and $\epsilon \geq 10$, respectively) provide more relaxed privacy guarantees, but increase utility. In this paper, we compare to the state-of-the-art data synthesis method MST \cite{mckenna2021winning}, which exhibits relatively high performance at ``small'' $\epsilon$ values, but is not explicitly geared to scenarios where an increased $\epsilon$ value would be acceptable (in our experiments,  performance may stagnate). We refer to the difference as ``excess privacy budget.'' 

Our methods, while not useful at $\epsilon$ values $\leq 1$ compared to MST, utilize this excess privacy budget at medium and large $\epsilon$ values to provide significant performance boosts in predictive analysis in some contexts. Furthermore, the small $\epsilon$ values have disparate impacts on minority populations in data \cite{ganev2021robin}. We work to address this issue, providing a method to balance or even eliminate harm done to the subgroup. When assessing our models, we borrow from the value range of \cite{bowen2020comparative}, and use values of $e^0, e^2,e^3,e^4$, or $\epsilon \approx 1.0, 7.38, 20.08, 54.59$, respectively.

Both DPSAGE and SuperQUAIL (and its derivatives) rely on the Standard Composition theorem to ensure differentially private results given an $\epsilon$ value \cite{dwork2014algorithmic}.
\begin{theorem}\textbf{Standard Composition Theorem}
Let $M_1: N^{|X|} \rightarrow R_1$ be an $\epsilon_1$-differentially private algorithm, and let $M_2: N^{|X|} \rightarrow R_1$ be $\epsilon_2$-differentially private algorithm. Then their combination, defined to be $M_{1,2}\rightarrow R_1XR_2$ by the mapping: $M_{1,2}(x)=(M_1(x),M_2(x))$ is $(\epsilon_1+\epsilon_2)$-differentially private.
\end{theorem}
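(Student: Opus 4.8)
The plan is to work directly from the definition of $\epsilon$-differential privacy, which requires that for every pair of neighboring databases $x, y \in N^{|X|}$ (differing in a single individual's record) and every measurable output event, the probabilities assigned by the mechanism differ by at most the multiplicative factor $e^{\epsilon}$. First I would fix neighboring $x$ and $y$ and an arbitrary event $T \subseteq R_1 \times R_2$, and reduce the set-level guarantee to a pointwise statement about individual output pairs $(r_1, r_2)$; in the discrete case this amounts to bounding $\Pr[M_{1,2}(x) = (r_1,r_2)]$ against $\Pr[M_{1,2}(y) = (r_1,r_2)]$, and in the continuous case the corresponding statement about joint densities.

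The key step exploits the fact that $M_{1,2}$ runs $M_1$ and $M_2$ on \emph{independent} internal randomness, so the joint output distribution factorizes: $\Pr[M_{1,2}(x) = (r_1, r_2)] = \Pr[M_1(x) = r_1] \cdot \Pr[M_2(x) = r_2]$, and likewise for $y$. Applying the hypothesis that $M_1$ is $\epsilon_1$-private and that $M_2$ is $\epsilon_2$-private gives $\Pr[M_1(x) = r_1] \le e^{\epsilon_1} \Pr[M_1(y) = r_1]$ together with $\Pr[M_2(x) = r_2] \le e^{\epsilon_2} \Pr[M_2(y) = r_2]$. Multiplying these two inequalities and substituting the factorization yields the pointwise bound $\Pr[M_{1,2}(x) = (r_1,r_2)] \le e^{\epsilon_1 + \epsilon_2} \Pr[M_{1,2}(y) = (r_1,r_2)]$, where I use that $e^{\epsilon_1} e^{\epsilon_2} = e^{\epsilon_1 + \epsilon_2}$.

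To finish, I would lift the pointwise bound back to the event $T$ by summing over all $(r_1, r_2) \in T$ (or integrating the joint density over $T$), which preserves the uniform multiplicative factor and gives $\Pr[M_{1,2}(x) \in T] \le e^{\epsilon_1 + \epsilon_2} \Pr[M_{1,2}(y) \in T]$. Since $x$, $y$, and $T$ were arbitrary, this establishes $(\epsilon_1+\epsilon_2)$-differential privacy of the combination. The main obstacle, and the hypothesis most easily overlooked, is the \emph{independence} of the random coins used by $M_1$ and $M_2$: without it the joint distribution need not factorize, and the product of the two per-mechanism bounds is no longer justified. A secondary technical point is the measure-theoretic care required in the continuous case, where the argument proceeds via densities and monotonicity of the integral rather than a finite sum; this is routine but worth stating for completeness.
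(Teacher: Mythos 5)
Your proof is correct, and it is the canonical argument: factorize the joint output distribution using the independence of the two mechanisms' internal randomness, apply each mechanism's guarantee pointwise, multiply, and lift to arbitrary events $T \subseteq R_1 \times R_2$ by summation (or integration). There is, however, nothing in the paper to compare it against: the paper states this result (its Theorem~1) without proof, citing it as a known result from Dwork and Roth's textbook, and its own appendix proofs only \emph{invoke} it as a black box to do additive budget accounting for DPSAGE and SuperQUAIL ($\epsilon = \epsilon_S + \epsilon_C$, etc.). Your argument coincides with the standard proof in that cited reference, and your emphasis on independence of the coins is exactly the hypothesis that proof turns on. One cosmetic difference worth noting: the paper's appendix writes the conclusion in the equivalent lower-bound ratio form $\Pr[M_{1,2}(x)=(r_1,r_2)]\,/\,\Pr[M_{1,2}(y)=(r_1,r_2)] \geq \exp\!\left(-(\epsilon_1+\epsilon_2)\right)$, which follows from your upper-bound form simply by exchanging the roles of the neighboring datasets $x$ and $y$.
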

\begin{theorem}
DPSAGE and SuperQUAIL are $\epsilon, \delta$-differentially private.
\end{theorem}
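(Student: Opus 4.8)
The plan is to decompose each algorithm into its constituent privacy-consuming primitives, argue that each primitive is differentially private in isolation, and then reassemble the guarantee using two standard facts: the composition theorem stated above (extended to its $(\epsilon,\delta)$ form) and the post-processing invariance of DP. The overall strategy is thus to identify every point at which the algorithm touches the raw data $D$, bound the privacy cost at each such point, and sum.

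First I would handle DPSAGE (Algorithm~\ref{alg:dpsage}). There are exactly two primitives that access $D$ directly: the DP supervised learner $C$, trained with budget $\epsilon_C$, which is $\epsilon_C$-DP by the guarantee of DP logistic regression~\cite{chaudhuri2011differentially}; and the DP sampler $S$ (instantiated as MST), initialized with budget $\epsilon_S$, which is $(\epsilon_S,\delta_S)$-DP by the guarantee of MST~\cite{mckenna2021winning}. The remaining steps, the marginal imputer $ImpSage(S)$ and the permutation estimator $PermSAGE$, never query $D$ again: the imputer draws exclusively from the private samples produced by $S$, and the estimator operates on the already-trained $C$. These are therefore \emph{post-processing} of the two DP outputs and incur no additional privacy loss. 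Composing the two primitives gives that DPSAGE is $(\epsilon_S+\epsilon_C,\delta_S)$-DP, and by construction the split $\gamma$ is chosen so that $\epsilon_S+\epsilon_C$ does not exceed the total allotted $\epsilon$.

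Next I would handle SuperQUAIL (Algorithm~\ref{alg:sq}), which is built from three families of primitives: one invocation of DPSAGE (already shown to be DP above), the embedded MST synthesizer used to produce the starting sample, and the $k$ feature-targeted DP classifiers used in the iterative querying step, each trained with its own slice of the budget. Each classifier is $\epsilon_i$-DP by the same logistic-regression argument, and MST contributes its $(\epsilon,\delta)$ guarantee. The crucial observation is that the iterative replacement loop, successively overwriting the values of a few dimensions of the sample in random order, is itself post-processing: the replacement values are drawn from the already-trained DP classifiers and from DP samples, so the loop never re-examines $D$. Composition over DPSAGE, the synthesizer, and the $k$ classifiers then yields the claimed $(\epsilon,\delta)$ guarantee, with the $\epsilon_i$ summing to the allocated budget and the $\delta$ contributions (which arise solely from the Gaussian-mechanism core of MST) summing across invocations.

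The main obstacle I anticipate is twofold. The conceptual crux is justifying the post-processing claims rigorously: one must verify that the SAGE estimation and, especially, the iterative feature-replacement loop genuinely access $D$ \emph{only} through the DP-protected model and sample outputs, since any hidden dependence on the raw data would break the argument. The technical crux is the $\delta$ term: the composition theorem as stated covers pure $\epsilon$-DP, whereas MST's reliance on the Gaussian mechanism forces a nonzero $\delta$, so I would invoke the standard $(\epsilon,\delta)$ extension of composition, under which $(\epsilon_1,\delta_1)$ and $(\epsilon_2,\delta_2)$ mechanisms compose to $(\epsilon_1+\epsilon_2,\delta_1+\delta_2)$, to carry the $\delta$ bookkeeping through. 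Everything else is routine budget arithmetic.
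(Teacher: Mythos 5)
Your proposal is correct and follows the same core strategy as the paper's own proof: decompose each algorithm into its privacy-consuming primitives, verify that the budget split sums to the allotted $\epsilon$ (for DPSAGE, $\epsilon_S + \epsilon_C = \gamma\epsilon + (1-\gamma)\epsilon$; for SuperQUAIL, $\epsilon = \beta\epsilon_{C_i} + \epsilon_S + \epsilon_{C_{f^*}}$), and invoke the Standard Composition Theorem. However, your write-up is strictly more complete in two respects where the paper's proof is silent. First, the paper never argues that the SAGE permutation estimator, the marginal imputer, or SuperQUAIL's iterative feature-replacement loop incur no additional privacy cost; your explicit appeal to post-processing invariance is exactly the missing justification, and it is the step that genuinely requires care (one must verify that the replacement loop touches $D$ only through the already-trained DP classifiers and DP samples, never the raw data). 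Second, the paper states the result as $(\epsilon,\delta)$-DP but tracks only $\epsilon$ in its composition inequalities, never accounting for where $\delta$ originates (the Gaussian-mechanism core of MST) or how it accumulates across components; your use of the $(\epsilon,\delta)$ extension of composition, with the $\delta$ terms summing, supplies the bookkeeping the paper omits. In short, you prove the same statement by the same route, but you close gaps that the paper's own proof leaves open.
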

\begin{proof}
The simple proof of this theorem relies on an additive analysis based on the Composition Theorem, omitted due to space constraints (see Appendix for full details).
\end{proof}

\begin{algorithm}[tb]
\small 
\caption{SuperQUAIL}
\label{alg:sq}
\textbf{Input}: Real data $D$, $DPSAGE$ model \\
\textbf{Parameter(1)}: Budget $\epsilon$, split $\alpha$, feat importance threshold $\beta$ \\
\textbf{Parameter(2)}: Untrained DP Synthesizer $S$, DP supervised learning class $C$, model hyperparameters $H_S$ and $H_C$, target feature $f^*$\\
\textbf{Output}: DP synthetic data $\hat{D}$
\begin{algorithmic}[1] 
\STATE Split $\epsilon$ budget, $\epsilon_{S} = \alpha * \epsilon$ and $\epsilon_{C} = 1 - (\alpha * \epsilon)$ \label{lst:line:eps1}
\STATE Further subdivide $\epsilon_{C_i} = \frac{\epsilon_{C}}{\beta}$ \label{lst:line:eps2}
\STATE From $DPSAGE(\epsilon_{S})$, retrieve feature ranking $F$, values $V_F$, perpetuate trained $S(H_S,D)$, $C_{f^*}(H_C,D)$ for use later.
\FOR{$f$ in $F[:\beta]$}\label{lst:line:fit} 
\STATE \textbf{Fit step:} Train $C_i = C(\textit{target}=f, \epsilon_{C_i}, D, H_C)$. 
\ENDFOR
\FOR{Range of $|n|$, desired samples} 
\STATE \textbf{Sample step:} Generate sample $s \sim S$ \label{lst:line:sample}
\FOR{$C_i$ in random order of $C = \{C_1,C_2...C_{\beta}\}\cup \{C_{f^*}\}$} 
\STATE Replace $s[i] = x$ with $x = C_i(s'| s' = s \setminus \{f_i\})$. 
\ENDFOR
\STATE Add modified $s$ to synthetic dataset, $\hat{D} \cup \{s\}$
\ENDFOR
\STATE \textbf{return} DP synthetic data $\hat{D}$
\end{algorithmic}
\end{algorithm}

\vspace{-0.2in}
\section{Fair SuperQUAIL}
\label{sec:fair}
We make use of the inherent tunability of each of the components of SuperQUAIL to encourage a fairness target. For FSQ-Bal and FSQ-FNR, we make the following modifications to Algorithm~\ref{alg:sq}.

{\bf (1)} Within DPSAGE (line~\ref{lst:line:eps2}), declare a sensitive feature (e.g., race). Divide the DPSAGE $\epsilon$ budget between DP classifiers, each trained solely on the samples from one of the groups defined by the sensitive feature (e.g., race=black and race=white).\footnote{For ease of exposition, we discuss fairness w.r.t. a binary sensitive feature; multinary sensitive features are handled analogously.} Calculate feature importance for each group-wise classifier, since different features may explain classification of different groups; compute ranked lists per group.

{\bf (2)} Before training the classifiers (line~\ref{lst:line:fit}), round-robin through the per-group ranked lists of features, selecting 1 at a time from each list, until a total of $\beta$ features is selected.   Preferentially choose the protected group (e.g., race=black) on the last round.


{\bf (3)} Initialize FSQ-Bal or FSQ-FNR by specifying a weighted target constraint: maximize accuracy while equalizing false-negatives between groups for FSQ-Bal, or while minimizing false negatives for the protected group for FSQ-FNR. Within the iterative sampling step  (line~\ref{lst:line:sample}), tune the binary probability threshold $p$ for the target classifier: start at $p=0.5$, perturb until constraints are met. 

\section{Experimental Evaluation}
\label{sec:exp}

{\bf Data sets.} Many papers on DP synthesizers rely on standard machine learning data benchmarks, like UCI Adult and UCI Mushroom. These data scenarios tend to be unrealistic, and, in the case of UCI Adult, represent odd trends that are not reflected in comparable data today~\cite{ding2021retiring}. In light of this, we perform our tests on different US state specific Census scenarios, based on very recent data (from 2018), following guidance from~\cite{ding2021retiring} on better data scenarios for fairness. The predictive analysis is thus more difficult, but realistic. All experiments in this section were executed for three binary classification scenarios from the \href{https://github.com/zykls/folktables}{Folktables Project}: predicting employment status on \acse (state=CA, 17 features); predicting whether a low-income individual, not eligible for Medicare, has coverage from public health insurance on \acsh (state=NM, 19 features); and predicting whether a young adult moved addresses in the last year on \acst (state=NM, 21 features). We selected these scenarios based on findings in~\cite{ding2021retiring}, to explore a range of conditions for accuracy and fairness.
We decided to focus on US States studied at length in their paper, and so went with ACSEmployment for CA from 2018 (a larger dataset, which has n=378,817 samples), and NM for ACSPublicCoverage and ACSMobility, (a smaller dataset which has n=7,693 samples). Furthermore, both CA for ACSEmployment and NM for ACSPublicCoverage/ACSMobility had relatively balanced base rates for classification ($\approx 50\%$ positive/negative examples), which made preserving the predictive analysis through data synthesis more challenging.

{\bf Settings.} We compare performance of a DP classifier on real data (Real) to that on DP synthetic data generated by our methods (SQ, FSQ-Bal, FSQ-FNR) and by a strong baseline (MST)~\cite{mckenna2021winning}.  SuperQUAIL (SQ), presented in Section~\ref{sec:wise}, is an ensemble synthesizer that allocates the privacy budget in-line with feature importance. Fair SuperQUAIL (FSQ), presented in Section~\ref{sec:fair}, allocates privacy budget in-line with per-group feature importance.  Its variant FSQ-Bal balances accuracy across groups, while FSQ-FNR minimizes the FNR for the disadvantaged group.  Recall from Section~\ref{sec:wise} that SQ, FSQ-Bal, and FSQ-FNR all use MST as their embedded DP synthesizer.

\begin{table}[t!]
\centering
\small 
\caption{DPSAGE average performance (over 10 runs) at varying epsilons for top-5 ranked features and their SAGE values. }
\begin{tabular}{lrrrr}
\toprule
Scenario  & $\epsilon_1/\epsilon_2$ & nDCG & Jaccard & MAP \\
\midrule
\acse      & 0.2/1.0  & 0.73/0.97 & 0.54/0.80  & 0.58/0.81      \\
\acsh    & 0.2/1.0  & 0.40/0.59  & 0.30/0.41  &  0.33/0.44     \\
\acst  & 0.2/1.0  & 0.96/0.98   & 0.64/0.70  & 0.66/0.71    \\
\bottomrule
\end{tabular}
\vspace{-0.1in}
\label{tab:sageresults}
\end{table}

We experiment with a range of privacy budgets, $\epsilon \in \{ e^0, e^1, e^2, e^3\}$, with 10 runs per experiment.  All DP synthesizers receive the same overall privacy budget.  Due to space constraints, we present detailed results for  $\epsilon \in \{ e^0, e^2, e^3\}$ for \acse, using DPLogisticRegression~\cite{diffprivlib} 
for the classification task.  Results for other scenarios are summarized here and detailed in the Appendix. 

In Algorithm~\ref{alg:sq}, lines~\ref{lst:line:eps1}-~\ref{lst:line:eps2}, the $\gamma,\alpha$ epsilon splits for SQ, FSQ-Bal and FSQ-FNR in Figures~\ref{fig:accuracy}-\ref{fig:fpr} was decided via a grid search over $[0.2,0.8]$ at intervals of 0.1, and we also grid search over $\beta \in \{2,3,4,5\}$ feature importance thresholds. We observed experimentally that, at low epsilon values ($\epsilon=e^0,e^2$), an overall $\alpha$-split of $\{0.7\epsilon,0.3\epsilon\}$ between DPSAGE and the classifiers often performed best, while the internal DPSAGE  $\gamma$-split was most often best at $\{0.5\epsilon,0.5\epsilon\}$. For larger $\epsilon$ values ($\epsilon = e^3, e^4$), we found an even split ($\{0.5\epsilon,0.5\epsilon\}$) across all components performed well.

{\bf Feature importance with DPSAGE.} In this experiment, we evaluate the accuracy of DPSAGE for feature importance estimation by comparing the top-5 features selected by DPSAGE to those computed over the real data.  Table ~\ref{tab:sageresults} presents these results for nDCG, Jaccard similarity, and MAP. (Note that 1 is the best possible value for all these metrics.) 

Even at low epsilon values (such as $\epsilon=0.2$) the parity between real data and DP scores is quite good, although exactly \textit{how} good depends on context. Note that we are using lower $\epsilon$ values in Table~\ref{tab:sageresults} than in the end-to-end experiments below, since DPSAGE gets only a portion of the budget. Figure~\ref{fig:examplesage} gives a visual comparison of feature importance values for Real SAGE and DPSAGE, for a run on \acse.




\begin{figure}[t!]
\centering
\includegraphics[width=7cm]{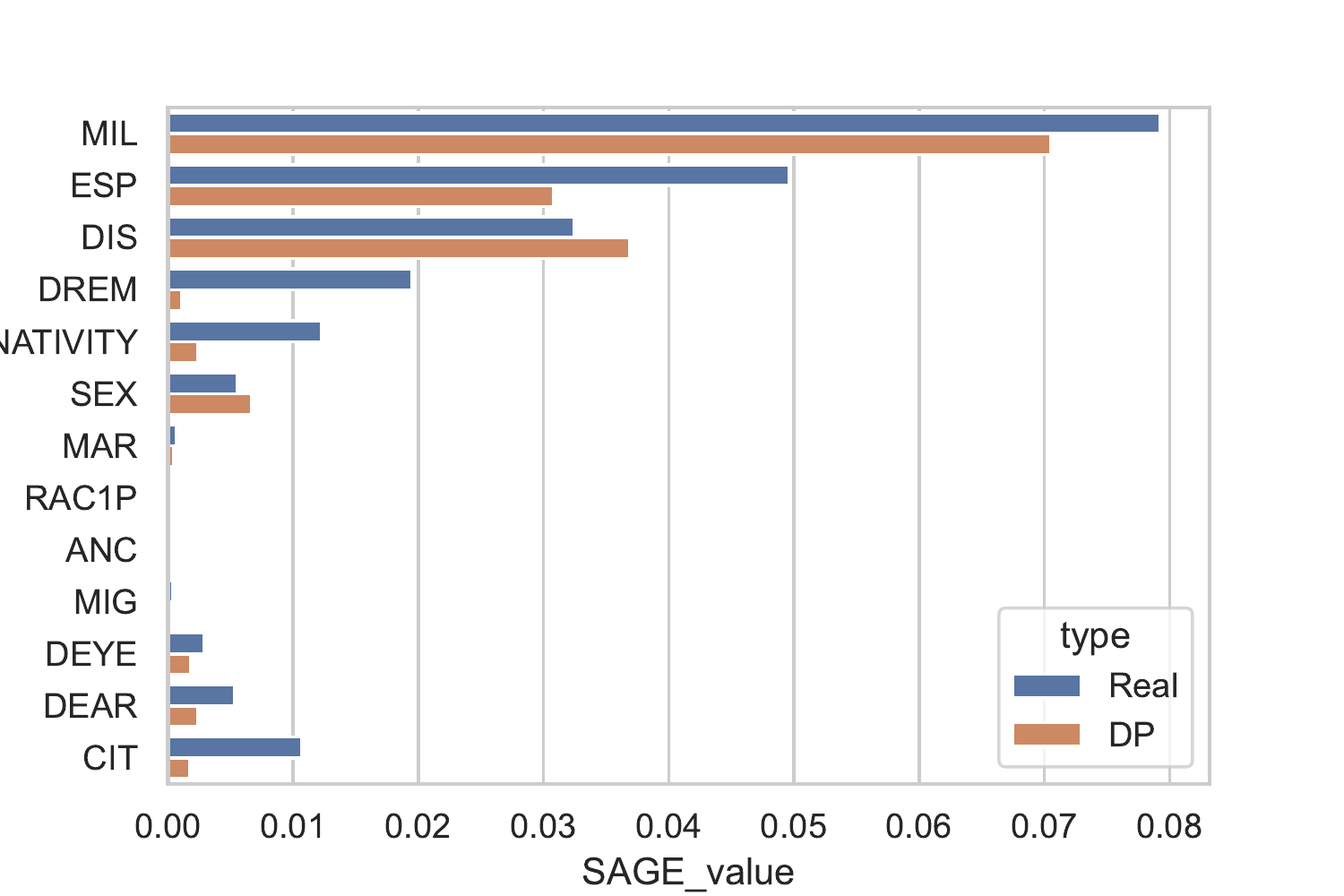}
\caption{A sample plot of feature importance values for Real SAGE and DPSAGE at $\epsilon=1.0$ for ACSEmployment}
\label{fig:examplesage}
\vspace{-0.1in}
\end{figure}

{\bf Performance of SuperQUAIL and Fair SuperQUAIL.} We now evaluate the end-to-end-performance of SuperQUAIL (SQ) and FairSuperQUAIL (FSQ-Bal and FSQ-FNR), comparing them to performance on real and MST-generated data. Figures~\ref{fig:accuracy} and~\ref{fig:f1} show DP classifier accuracy and f1 score, respectively. Observe that SQ and FSQ exhibit higher variance compared to MST at $\epsilon = e^0$, and thus an unreliable advantage over MST for a low privacy budget.  However, beginning with $\epsilon = e^1$ (not shown), and particularly for $\epsilon = \{ e^2, e^3 \}$, SQ outperforms MST, both overall and in each white and black group. This points to the effectiveness of the SQ framework, and particularly to incorporating feature importance into budget allocation. A similar trend holds for \acst , while \acsh requires a higher privacy budget $\epsilon = \{ e^3, e^4 \}$  (see Appendix). It is worth noting that MST outperforms SQ in parity with real data on measures of aggregate statistical fidelity (i.e., mean, variance, etc.), though this is not the target of our work.

For our fair interventions, note that FSQ-Bal balances accuracy across groups, while outperforming MST in both accuracy and f1, albeit with higher variance. 

Finally, to evaluate the effectiveness of different FSQ strategies on \acse, we present false negative rates (FNR) and false positive rates (FPR) in Figures~\ref{fig:fnr} and~\ref{fig:fpr}, respectively.  Based on these results, we observe that FSQ-Bal is, indeed, able to balance FPR and FNR across groups.  Further, we observe that FSQ-FNR achieves lower FNR for the black group even compared to the DP classifier on real data, affirming the effectiveness of our methods.

\balance 
\vspace{-0.1in}
\section{Conclusion and Future Work}
We presented SuperQUAIL, and its fair derivatives, to address shortcomings of DP data synthesis. SQ better allocates $\epsilon$ in settings where improving performance for prediction relies on capturing complex conditional dependence in synthetic data, all while providing mechanisms to reduce minority group harms exacerbated by data privatization.

\textbf{Limitations. }Marginal-based DP data synthesizers, such as MST, outperform SuperQUAIL at low $\epsilon$ values, though these high-privacy settings present dangerous fairness concerns. However, when fairness is less of a concern, or when predictive analysis is simpler, a general purpose (marginal based) method, such as MST, may be a better choice.

\textbf{Future Work. }SuperQUAIL is inherently limited by the quality of the embedded DP models; as these methods improve, the SuperQUAIL framework itself is expected to achieve better performance. 
With this work, we hope to encourage others to prioritize explicit mechanisms for addressing adverse harms brought about by data privatization in their DP synthesizers. Exploration into methods for fair data synthesis that work at low $\epsilon$ values, or theoretical analysis concluding that such systems are unfeasible, is sorely needed.

\newpage 






\small{
\bibliographystyle{named}
\bibliography{ijcai22}

\begin{thebibliography}{}

\bibitem[\protect\citeauthoryear{Abowd}{2018}]{abowd2018us}
John~M Abowd.
\newblock The us census bureau adopts differential privacy.
\newblock In {\em ACM SIGKDD}, pages 2867--2867, 2018.

\bibitem[\protect\citeauthoryear{Angwin \bgroup \em et al.\egroup
  }{2016}]{propublica}
Julia Angwin, Jeff Larson, Surya Mattu, and Lauren Kirchner.
\newblock Machine bias: Risk assessments in criminal sentencing.
\newblock {\em ProPublica}, 2016.

\bibitem[\protect\citeauthoryear{Arnold and
  Neunhoeffer}{2020}]{arnold2020really}
Christian Arnold and Marcel Neunhoeffer.
\newblock Really useful synthetic data--a framework to evaluate the quality of
  differentially private synthetic data.
\newblock {\em arXiv preprint arXiv:2004.07740}, 2020.

\bibitem[\protect\citeauthoryear{Bowen and Liu}{2020}]{bowen2020comparative}
Claire~McKay Bowen and Fang Liu.
\newblock Comparative study of differentially private data synthesis methods.
\newblock {\em Statistical Science}, 35(2):280--307, 2020.

\bibitem[\protect\citeauthoryear{Chaudhuri \bgroup \em et al.\egroup
  }{2011}]{chaudhuri2011differentially}
Kamalika Chaudhuri, Claire Monteleoni, and Anand~D Sarwate.
\newblock Differentially private empirical risk minimization.
\newblock {\em Journal of Machine Learning Research}, 12(3), 2011.

\bibitem[\protect\citeauthoryear{Chouldechova}{2017}]{DBLP:journals/bigdata/Chouldechova17}
Alexandra Chouldechova.
\newblock Fair prediction with disparate impact: {A} study of bias in
  recidivism prediction instruments.
\newblock {\em Big Data}, 5(2):153--163, 2017.

\bibitem[\protect\citeauthoryear{Covert \bgroup \em et al.\egroup
  }{2020}]{DBLP:conf/nips/CovertLL20}
Ian Covert, Scott~M. Lundberg, and Su{-}In Lee.
\newblock Understanding global feature contributions with additive importance
  measures.
\newblock In {\em NeurIPS}, 2020.

\bibitem[\protect\citeauthoryear{Ding \bgroup \em et al.\egroup
  }{2020}]{ding2020differentially}
Jiahao Ding, Xinyue Zhang, Xiaohuan Li, Junyi Wang, Rong Yu, and Miao Pan.
\newblock Differentially private and fair classification via calibrated
  functional mechanism.
\newblock In {\em AAAI}, volume~34, pages 622--629, 2020.

\bibitem[\protect\citeauthoryear{Ding \bgroup \em et al.\egroup
  }{2021}]{ding2021retiring}
Frances Ding, Moritz Hardt, John Miller, and Ludwig Schmidt.
\newblock Retiring adult: New datasets for fair machine learning.
\newblock {\em NeurIPS}, 2021.

\bibitem[\protect\citeauthoryear{Dwork \bgroup \em et al.\egroup
  }{2006}]{dwork2006calibrating}
Cynthia Dwork, Frank McSherry, Kobbi Nissim, and Adam Smith.
\newblock Calibrating noise to sensitivity in private data analysis.
\newblock In {\em Theory of cryptography conference}, pages 265--284. Springer,
  2006.

\bibitem[\protect\citeauthoryear{Dwork \bgroup \em et al.\egroup
  }{2009}]{dwork2009complexity}
Cynthia Dwork, Moni Naor, Omer Reingold, Guy~N Rothblum, and Salil Vadhan.
\newblock On the complexity of differentially private data release: efficient
  algorithms and hardness results.
\newblock In {\em Proceedings of the forty-first annual ACM symposium on Theory
  of computing}, pages 381--390, 2009.

\bibitem[\protect\citeauthoryear{Dwork \bgroup \em et al.\egroup
  }{2014}]{dwork2014algorithmic}
Cynthia Dwork, Aaron Roth, et~al.
\newblock The algorithmic foundations of differential privacy.
\newblock {\em Foundations and Trends in Theoretical Computer Science},
  9(3-4):211--407, 2014.

\bibitem[\protect\citeauthoryear{Dwork \bgroup \em et al.\egroup
  }{2019}]{dwork2019differential}
Cynthia Dwork, Nitin Kohli, and Deirdre Mulligan.
\newblock Differential privacy in practice: Expose your epsilons!
\newblock {\em Journal of Privacy and Confidentiality}, 9(2), 2019.

\bibitem[\protect\citeauthoryear{Ganev \bgroup \em et al.\egroup
  }{2021}]{ganev2021robin}
Georgi Ganev, Bristena Oprisanu, and Emiliano De~Cristofaro.
\newblock Robin hood and matthew effects--differential privacy has disparate
  impact on synthetic data.
\newblock {\em arXiv preprint arXiv:2109.11429}, 2021.

\bibitem[\protect\citeauthoryear{Harder \bgroup \em et al.\egroup
  }{2020}]{harder2020interpretable}
Frederik Harder, Matthias Bauer, and Mijung Park.
\newblock Interpretable and differentially private predictions.
\newblock In {\em Proceedings of the AAAI Conference on Artificial
  Intelligence}, volume~34, pages 4083--4090, 2020.

\bibitem[\protect\citeauthoryear{Hardt \bgroup \em et al.\egroup
  }{2010}]{hardt2010simple}
Moritz Hardt, Katrina Ligett, and Frank McSherry.
\newblock A simple and practical algorithm for differentially private data
  release.
\newblock {\em arXiv preprint arXiv:1012.4763}, 2010.

\bibitem[\protect\citeauthoryear{Holohan \bgroup \em et al.\egroup
  }{2019}]{diffprivlib}
Naoise Holohan, Stefano Braghin, P{\'o}l Mac~Aonghusa, and Killian Levacher.
\newblock Diffprivlib: the {IBM} differential privacy library.
\newblock {\em ArXiv e-prints}, 1907.02444 [cs.CR], July 2019.

\bibitem[\protect\citeauthoryear{Jagielski \bgroup \em et al.\egroup
  }{2019}]{jagielski2019differentially}
Matthew Jagielski, Michael Kearns, Jieming Mao, Alina Oprea, Aaron Roth, Saeed
  Sharifi-Malvajerdi, and Jonathan Ullman.
\newblock Differentially private fair learning.
\newblock In {\em ICML}, pages 3000--3008, 2019.

\bibitem[\protect\citeauthoryear{Jordon \bgroup \em et al.\egroup
  }{2018}]{jordon2018pate}
James Jordon, Jinsung Yoon, and Mihaela Van Der~Schaar.
\newblock Pate-gan: Generating synthetic data with differential privacy
  guarantees.
\newblock In {\em International conference on learning representations}, 2018.

\bibitem[\protect\citeauthoryear{Lee and Clifton}{2011}]{lee2011much}
Jaewoo Lee and Chris Clifton.
\newblock How much is enough? choosing $\varepsilon$ for differential privacy.
\newblock In {\em International Conference on Information Security}, pages
  325--340, 2011.

\bibitem[\protect\citeauthoryear{Lundberg and Lee}{2017}]{lundberg2017unified}
Scott~M Lundberg and Su-In Lee.
\newblock A unified approach to interpreting model predictions.
\newblock In {\em Proceedings of the 31st international conference on neural
  information processing systems}, pages 4768--4777, 2017.

\bibitem[\protect\citeauthoryear{McKenna \bgroup \em et al.\egroup
  }{2018}]{mckenna2018optimizing}
Ryan McKenna, Gerome Miklau, Michael Hay, and Ashwin Machanavajjhala.
\newblock Optimizing error of high-dimensional statistical queries under
  differential privacy.
\newblock {\em arXiv preprint arXiv:1808.03537}, 2018.

\bibitem[\protect\citeauthoryear{McKenna \bgroup \em et al.\egroup
  }{2021}]{mckenna2021winning}
Ryan McKenna, Gerome Miklau, and Daniel Sheldon.
\newblock Winning the {NIST} contest: A scalable and general approach to
  differentially private synthetic data.
\newblock {\em arXiv preprint arXiv:2108.04978}, 2021.

\bibitem[\protect\citeauthoryear{Nori \bgroup \em et al.\egroup
  }{2021}]{nori2021accuracy}
Harsha Nori, Rich Caruana, Zhiqi Bu, Judy~Hanwen Shen, and Janardhan Kulkarni.
\newblock Accuracy, interpretability, and differential privacy via explainable
  boosting.
\newblock In {\em International Conference on Machine Learning}, pages
  8227--8237. PMLR, 2021.

\bibitem[\protect\citeauthoryear{Obermeyer \bgroup \em et al.\egroup
  }{2019}]{obermeyer}
Ziad Obermeyer, Brian Powers, Christine Vogeli, and Sendhil Mullainathan.
\newblock Dissecting racial bias in an algorithm used to manage the health of
  populations.
\newblock {\em Science}, 366(6464):447--453, 2019.

\bibitem[\protect\citeauthoryear{Pujol \bgroup \em et al.\egroup
  }{2020}]{pujol2020fair}
David Pujol, Ryan McKenna, Satya Kuppam, Michael Hay, Ashwin Machanavajjhala,
  and Gerome Miklau.
\newblock Fair decision making using privacy-protected data.
\newblock In {\em ACM FAccT}, pages 189--199, 2020.

\bibitem[\protect\citeauthoryear{Rosenblatt \bgroup \em et al.\egroup
  }{2020}]{rosenblatt2020differentially}
Lucas Rosenblatt, Xiaoyan Liu, Samira Pouyanfar, Eduardo de~Leon, Anuj Desai,
  and Joshua Allen.
\newblock Differentially private synthetic data: Applied evaluations and
  enhancements.
\newblock {\em arXiv preprint arXiv:2011.05537}, 2020.

\bibitem[\protect\citeauthoryear{Tao \bgroup \em et al.\egroup
  }{2021}]{tao2021benchmarking}
Yuchao Tao, Ryan McKenna, Michael Hay, Ashwin Machanavajjhala, and Gerome
  Miklau.
\newblock Benchmarking differentially private synthetic data generation
  algorithms.
\newblock {\em arXiv preprint arXiv:2112.09238}, 2021.

\bibitem[\protect\citeauthoryear{Torkzadehmahani \bgroup \em et al.\egroup
  }{2019}]{torkzadehmahani2019dp}
Reihaneh Torkzadehmahani, Peter Kairouz, and Benedict Paten.
\newblock Dp-cgan: Differentially private synthetic data and label generation.
\newblock In {\em Proceedings of the IEEE/CVF Conference on Computer Vision and
  Pattern Recognition Workshops}, 2019.

\bibitem[\protect\citeauthoryear{Ullman and Vadhan}{2011}]{ullman2011pcps}
Jonathan Ullman and Salil Vadhan.
\newblock Pcps and the hardness of generating private synthetic data.
\newblock In {\em Theory of Cryptography Conference}, pages 400--416. Springer,
  2011.

\bibitem[\protect\citeauthoryear{Vietri \bgroup \em et al.\egroup
  }{2020}]{vietri2020new}
Giuseppe Vietri, Grace Tian, Mark Bun, Thomas Steinke, and Steven Wu.
\newblock New oracle-efficient algorithms for private synthetic data release.
\newblock In {\em ICML}, pages 9765--9774, 2020.

\bibitem[\protect\citeauthoryear{Zhang \bgroup \em et al.\egroup
  }{2017}]{zhang2017privbayes}
Jun Zhang, Graham Cormode, Cecilia~M Procopiuc, Divesh Srivastava, and Xiaokui
  Xiao.
\newblock Privbayes: Private data release via bayesian networks.
\newblock {\em ACM TODS}, 42(4):1--41, 2017.

\end{thebibliography}
}
\appendix

\LARGE\textbf{Appendix}\normalsize

\section{Results}
Below we summarize some findings in the other two data scenarios we explored from the \href{https://github.com/zykls/folktables}{$folktables$} project, which also rely on recent (2018) US Cencus Data. Both are data from the state of New Mexico. 
\subsection{ACSMobility}
We tested MST and SQ on the Mobility scenario, which predicts whether a person moved between residential addresses over the course of a year (only including individuals between the ages of 18 and 35, who are more likely to move).

Figures~\ref{fig:accuracymobility}-\ref{fig:fprmobility} contain results for \acst on the Real, MST and SuperQUAIL SQ synthesizers for $\epsilon \in \{ e^2, e^3, e^4\}$

Note that, at these medium/large $\epsilon$ values, the SuperQUAIL method matches the accuracy and F1 scores of the real data, whereas MST stagnates in its performance. However, the variance in performance on different subgroups of the population, and general difficulty of this scenario for predictive analysis (even the real data is skewed heavily) make it difficult to synthesize this data with any confidence in one-off fashion. This illustrates one of the limitations of DP synthetic data, when the data itself is noisy or inconsistent, the added noise can be preventatively difficult to deal with.

\begin{figure*}[t!]
\begin{subfigure}{\linewidth}
\centering
\includegraphics[width=.33\textwidth]{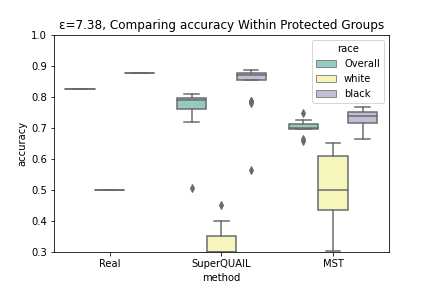}\hfill
\includegraphics[width=.33\textwidth]{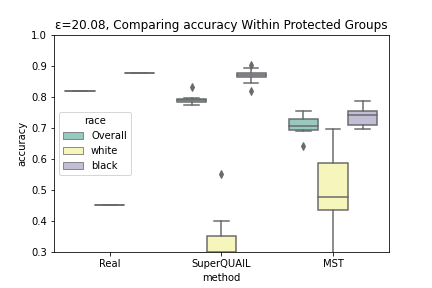}\hfill
\includegraphics[width=.33\textwidth]{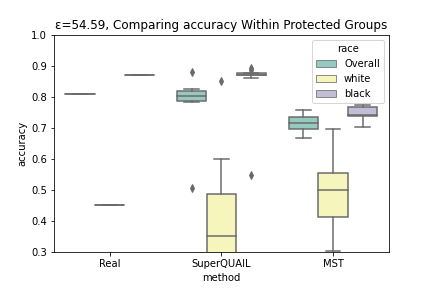}\hfill
\caption{Accuracy score at varying $\epsilon \in \{e^2,e^3,e^4\}$ on \acst.}
\label{fig:accuracymobility}
\end{subfigure}\par\medskip

\begin{subfigure}{\linewidth}
\centering
\includegraphics[width=.33\textwidth]{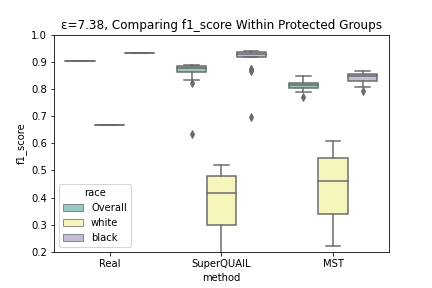}\hfill
\includegraphics[width=.33\textwidth]{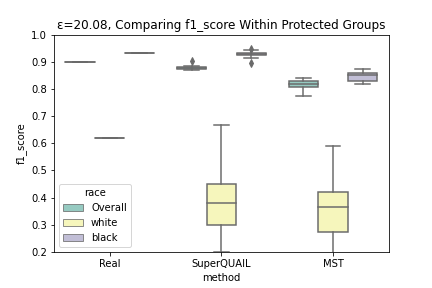}\hfill
\includegraphics[width=.33\textwidth]{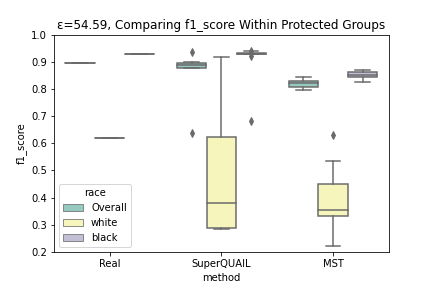}\hfill
\caption{F1 score at varying $\epsilon \in \{e^2,e^3,e^4\}$ on the \acst.}
\label{fig:f1mobility}
\end{subfigure}\par\medskip

\begin{subfigure}{\linewidth}
\centering
\includegraphics[width=.33\textwidth]{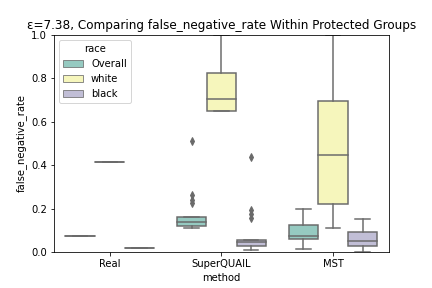}\hfill
\includegraphics[width=.33\textwidth]{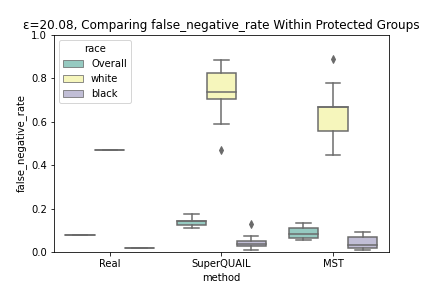}\hfill
\includegraphics[width=.33\textwidth]{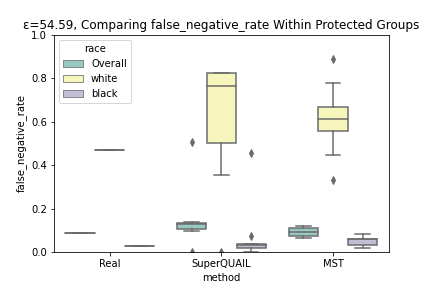}
\hfill
\caption{False negative rate at varying $\epsilon \in \{e^2,e^3,e^4\}$ on \acst.}
\label{fig:fnrmobility}
\end{subfigure}\par\medskip

\begin{subfigure}{\linewidth}
\centering

\includegraphics[width=.33\textwidth]{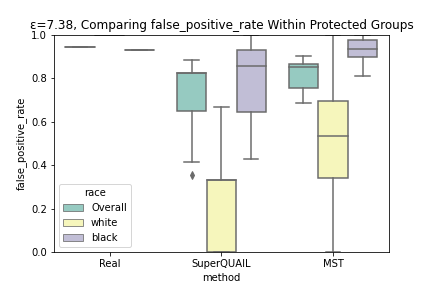}\hfill
\includegraphics[width=.33\textwidth]{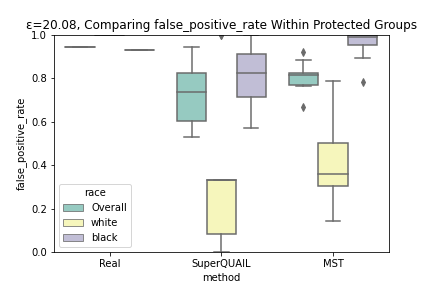}\hfill
\includegraphics[width=.33\textwidth]{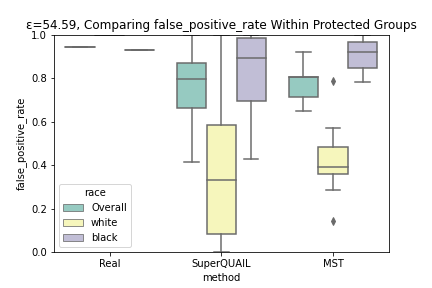}
\hfill
\caption{False positives rate at varying $\epsilon \in \{e^2,e^3,e^4\}$ on \acst.}
\label{fig:fprmobility}
\end{subfigure}\par\medskip
\end{figure*}

\subsection{ACSPublicCoverage}
We tested MST and SQ on the Public Coverage scenario, which predicts a persons by public health insurance status (has/doesn't have) - data filtered to include only individuals under the age of 65 making less than $\$30,000$. 

Figures~\ref{fig:accuracypublic}-\ref{fig:fprpublic} contain results for \acsh on the Real, MST and SuperQUAIL SQ synthesizers for $\epsilon \in \{ e^2, e^3, e^4\}$.

The \acsh  scenario exhibits similar behavior to the \acse scenario explored in the main body of the paper, although SuperQUAIL only begins to outperform MST at high privacy budgets $\epsilon \in \{ e^3, e^4\}$. The stark difference between Real data performance and the performance of the synthesizers during predictive analysis suggests that the target variables conditional dependence is noisy and sensitive to privatization.

\begin{figure*}[t!]

\begin{subfigure}{\linewidth}
\centering
\includegraphics[width=.33\textwidth]{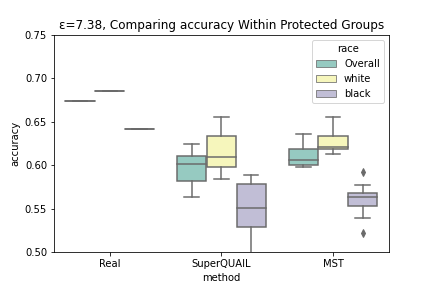}\hfill
\includegraphics[width=.33\textwidth]{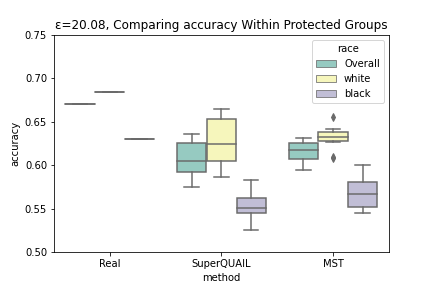}\hfill
\includegraphics[width=.33\textwidth]{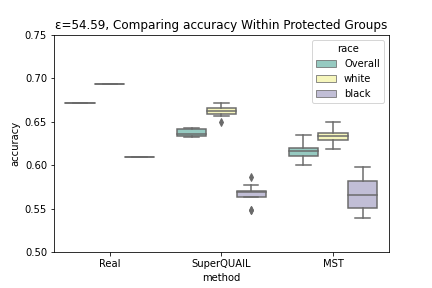}\hfill
\caption{Accuracy score at varying $\epsilon \in \{e^0,e^2,e^3\}$ on \acsh.}
\label{fig:accuracypublic}
\end{subfigure}\par\medskip

\begin{subfigure}{\linewidth}
\centering
\includegraphics[width=.33\textwidth]{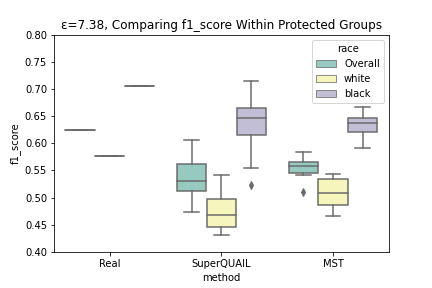}\hfill
\includegraphics[width=.33\textwidth]{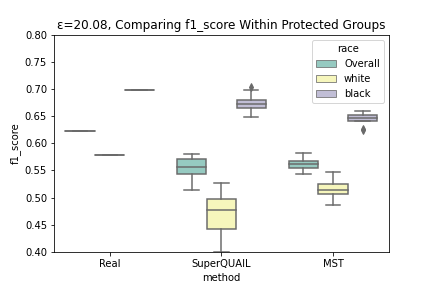}\hfill
\includegraphics[width=.33\textwidth]{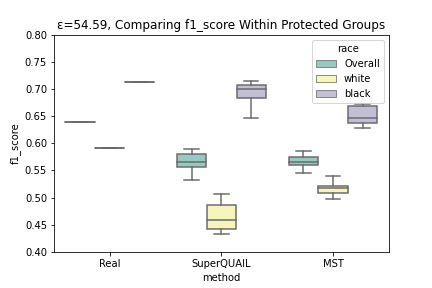}\hfill
\caption{F1 score at varying $\epsilon \in \{e^0,e^2,e^3\}$ on the \acsh.}
\label{fig:f1public}
\end{subfigure}\par\medskip

\begin{subfigure}{\linewidth}
\centering
\includegraphics[width=.33\textwidth]{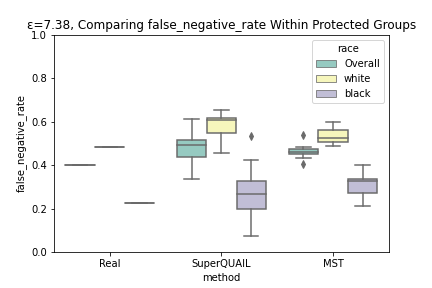}\hfill
\includegraphics[width=.33\textwidth]{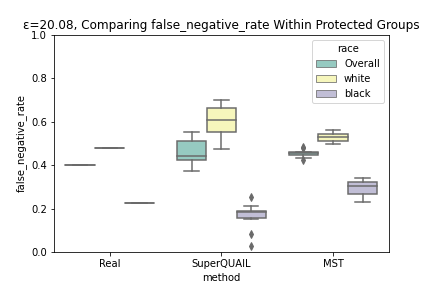}\hfill
\includegraphics[width=.33\textwidth]{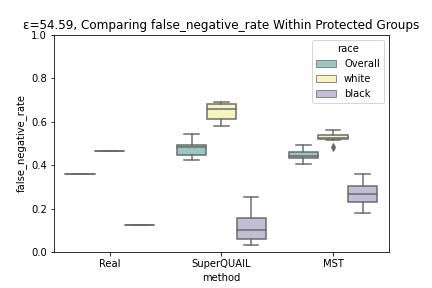}
\hfill
\caption{False negative rate at varying $\epsilon \in \{e^0,e^2,e^3\}$ on \acsh.}
\label{fig:fnrpublic}
\end{subfigure}\par\medskip

\begin{subfigure}{\linewidth}
\centering
\includegraphics[width=.33\textwidth]{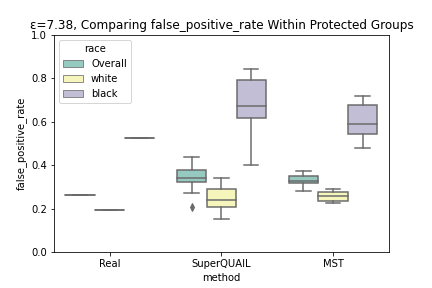}\hfill
\includegraphics[width=.33\textwidth]{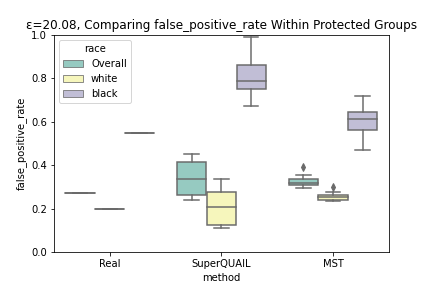}\hfill
\includegraphics[width=.33\textwidth]{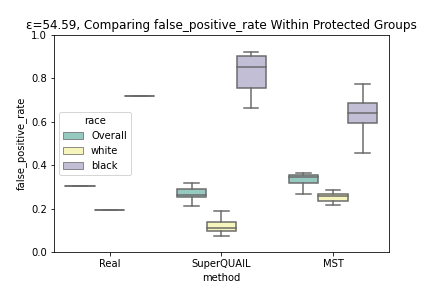}
\hfill
\caption{False positives rate at varying $\epsilon \in \{e^0,e^2,e^3\}$ on \acsh.}
\label{fig:fprpublic}
\end{subfigure}\par\medskip
\end{figure*}

\subsection{Replicability}
Note that the full evaluations, results, figure generating code, etc. is available at this \href{https://github.com/lurosenb/sq_supplementary_materials}{repository}.

\section{Proofs}
Proofs rely on notation derived from Algorithms \textbf{1} and \textbf{2} in main body of the paper.

\begin{theorem}
DPSAGE is $\epsilon, \delta$-differentially private.
\end{theorem}
\begin{proof}
\textit{DPSAGE}
Let the embedded DPSAGE Synthesizer, $S(H_S,\epsilon_S)$, be an $(\epsilon, \delta)$-differentially private mechanism mapping $M_1: N^{|X|} \rightarrow R_1$. Let the embedded DPSAGE Classifier, $C(H_C,\epsilon_C,target)$, be an $(\epsilon, \delta)$-differentially private mechanism mapping $M_2: N^{|X|} \rightarrow R_2$. Fix $0 < \gamma < 1$. Our original budget is $B$, thus by construction:
\begin{align}\label{dp}
    B &= \gamma * \epsilon + (1-\gamma) * \epsilon \\
    \epsilon_S &= \gamma * \epsilon \\
    \epsilon_C &= (1-\gamma) * \epsilon \\
    \text{By the \textit{Standard }}&\text{\textit{Composition Theorem} [Dwork et al.,2006]} \\
    \frac{Pr[M_{C,S}(x)=(r_1,r_2)]}{Pr[M_{C,S}(y)=(r_1,r_2)]}&\geq exp(-(\epsilon_S + \epsilon_C)) \\
    \frac{Pr[M_{C,S}(x)=(r_1,r_2)]}{Pr[M_{C,S}(y)=(r_1,r_2)]}&\geq exp(-(B))
\end{align}
\end{proof}

\begin{theorem}
SuperQUAIL is $\epsilon, \delta$-differentially private.
\end{theorem}
\begin{proof}
Fix $0 < \alpha < 1$. Set $\beta > 0$. By construction, we split original $\epsilon$ budget, $$\epsilon_{DPSAGE} = \alpha * \epsilon$$ and $$\epsilon_{C} = 1 - (\alpha * \epsilon)$$ we further subdivide $$\epsilon_{C_i} = \frac{\epsilon_{C}}{\beta}$$.

The $\epsilon_{C_i}$ budget is divided among $C = \{C_1,C_2...C_{\beta}\}$ The target feature classifier budget $\{C_{f^{*}}\}$ is consumed during the DPSAGE step, and is thus treated separately. Therefore, the budget for the embedded DP synthesizer $\epsilon_S$ and DP Classifiers $\epsilon_{C_i}$ sums to $\epsilon$:
$$
\epsilon = \beta \epsilon_{C_i} + (\epsilon_S + \epsilon_{C_{f^{*}}})
$$
and by the \textit{Standard Composition Theorem} maintains differential privacy.

\end{proof}

\section{Note: Datasets detail}
We have included tables describing the dataset features and their distributions in detail.

\begin{table*}
\npdecimalsign{.}
\nprounddigits{2}
\begin{tabular}{ln{1}{2}n{1}{2}n{1}{2}n{1}{2}n{1}{2}n{1}{2}n{1}{2}n{1}{2}n{1}{2}n{1}{2}n{1}{2}n{1}{2}n{1}{2}n{1}{2}}
\toprule
{} &            \text{MAR} &            \text{DIS} &            \text{ESP} &            \text{CIT} &            \text{MIG} &            \text{MIL} &            \text{ANC} &       \text{NAT} &           \text{DEAR} &           \text{DEYE} &           \text{DREM} &            \text{SEX} &          \text{RAC} &            \text{ESR} \\
\midrule
mean  &       3.054052 &       1.876402 &       0.591645 &       1.921218 &       1.235560 &       3.124163 &       1.703070 &       1.260793 &       1.964730 &       1.977712 &       1.851179 &       1.507211 &       3.100853 &       0.456165 \\
std   &       1.871732 &       0.329123 &       1.640849 &       1.527332 &       0.659972 &       1.588045 &       1.063244 &       0.439068 &       0.184463 &       0.147618 &       0.475799 &       0.499949 &       2.951456 &       0.498075 \\
min   &       1.000000 &       1.000000 &       0.000000 &       1.000000 &       0.000000 &       0.000000 &       1.000000 &       1.000000 &       1.000000 &       1.000000 &       0.000000 &       1.000000 &       1.000000 &       0.000000 \\
25\%   &       1.000000 &       2.000000 &       0.000000 &       1.000000 &       1.000000 &       4.000000 &       1.000000 &       1.000000 &       2.000000 &       2.000000 &       2.000000 &       1.000000 &       1.000000 &       0.000000 \\
50\%   &       3.000000 &       2.000000 &       0.000000 &       1.000000 &       1.000000 &       4.000000 &       1.000000 &       1.000000 &       2.000000 &       2.000000 &       2.000000 &       2.000000 &       1.000000 &       0.000000 \\
75\%   &       5.000000 &       2.000000 &       0.000000 &       4.000000 &       1.000000 &       4.000000 &       2.000000 &       2.000000 &       2.000000 &       2.000000 &       2.000000 &       2.000000 &       6.000000 &       1.000000 \\
max   &       5.000000 &       2.000000 &       8.000000 &       5.000000 &       3.000000 &       4.000000 &       4.000000 &       2.000000 &       2.000000 &       2.000000 &       2.000000 &       2.000000 &       9.000000 &       1.000000 \\
\bottomrule
\end{tabular}
\caption{\acse  scenario data description. Count=378817}
\end{table*}

\begin{table*}
\npdecimalsign{.}
\nprounddigits{2}
\begin{tabular}{ln{1}{2}n{1}{2}n{1}{2}n{1}{2}n{1}{2}n{1}{2}n{1}{2}n{1}{2}n{1}{2}n{1}{2}n{1}{2}n{1}{2}n{1}{2}n{1}{2}}
\toprule
{} &          \text{MAR} &          \text{SEX} &          \text{DIS} &          \text{CIT} &          \text{MIL} &          \text{ANC} &     \text{NAT} &         \text{DEAR} &         \text{DEYE} &         \text{DREM} &        \text{RAC} &          \text{COW} &          \text{ESR} &          \text{MIG} \\
\midrule
mean  &     0.417369 &     0.482719 &     0.923613 &     0.156550 &     0.164392 &     0.989834 &     0.079872 &     0.986930 &     0.015974 &     0.046181 &     1.886146 &     1.569852 &     0.939588 &     0.749927 \\
std   &     0.628410 &     0.499774 &     0.265655 &     0.565988 &     0.610520 &     0.630119 &     0.271135 &     0.113591 &     0.125395 &     0.209907 &     1.602345 &     2.116038 &     0.961102 &     0.433118 \\
min   &     0.000000 &     0.000000 &     0.000000 &     0.000000 &     0.000000 &     0.000000 &     0.000000 &     0.000000 &     0.000000 &     0.000000 &     0.000000 &     0.000000 &     0.000000 &     0.000000 \\
25\%   &     0.000000 &     0.000000 &     1.000000 &     0.000000 &     0.000000 &     1.000000 &     0.000000 &     1.000000 &     0.000000 &     0.000000 &     1.000000 &     0.000000 &     0.000000 &     0.500000 \\
50\%   &     0.000000 &     0.000000 &     1.000000 &     0.000000 &     0.000000 &     1.000000 &     0.000000 &     1.000000 &     0.000000 &     0.000000 &     1.000000 &     0.000000 &     1.000000 &     1.000000 \\
75\%   &     1.000000 &     1.000000 &     1.000000 &     0.000000 &     0.000000 &     1.000000 &     0.000000 &     1.000000 &     0.000000 &     0.000000 &     2.000000 &     3.000000 &     1.000000 &     1.000000 \\
max   &     4.000000 &     1.000000 &     1.000000 &     4.000000 &     3.000000 &     3.000000 &     1.000000 &     1.000000 &     1.000000 &     1.000000 &     7.000000 &     9.000000 &     5.000000 &     1.000000 \\
\bottomrule
\end{tabular}
\caption{\acst  scenario data description. Count=3443}
\end{table*}

\begin{table*}
\npdecimalsign{.}
\nprounddigits{2}
\begin{tabular}{ln{1}{2}n{1}{2}n{1}{2}n{1}{2}n{1}{2}n{1}{2}n{1}{2}n{1}{2}n{1}{2}n{1}{2}n{1}{2}n{1}{2}n{1}{2}n{1}{2}n{1}{2}n{1}{2}}
\toprule
{} &          \text{MAR} &          \text{SEX} &          \text{DIS} &          \text{ESP} &          \text{CIT} &          \text{MIG} &          \text{MIL} &          \text{ANC} &     \text{NAT} &         \text{DER} &         \text{DEY} &         \text{DRM} &          \text{ESR} &          \text{FER} &        \text{RAC} &       \text{PUB} \\
\midrule
mean  &     0.793709 &     0.542181 &     0.160666 &     0.216431 &     0.192903 &     0.851553 &     0.295333 &     0.452359 &     0.098271 &     0.032497 &     0.036267 &     0.072923 &     0.722215 &     0.388535 &     1.488236 &     0.466138 \\
std   &     1.033853 &     0.498250 &     0.367246 &     0.908062 &     0.633373 &     0.376175 &     0.876026 &     0.762296 &     0.297700 &     0.177328 &     0.186965 &     0.260028 &     0.922460 &     0.525688 &     2.122980 &     0.498884 \\
min   &     0.000000 &     0.000000 &     0.000000 &     0.000000 &     0.000000 &     0.000000 &     0.000000 &     0.000000 &     0.000000 &     0.000000 &     0.000000 &     0.000000 &     0.000000 &     0.000000 &     0.000000 &     0.000000 \\
25\%   &     0.000000 &     0.000000 &     0.000000 &     0.000000 &     0.000000 &     1.000000 &     0.000000 &     0.000000 &     0.000000 &     0.000000 &     0.000000 &     0.000000 &     0.000000 &     0.000000 &     0.000000 &     0.000000 \\
50\%   &     1.000000 &     1.000000 &     0.000000 &     0.000000 &     0.000000 &     1.000000 &     0.000000 &     0.000000 &     0.000000 &     0.000000 &     0.000000 &     0.000000 &     1.000000 &     0.000000 &     0.000000 &     0.000000 \\
75\%   &     1.000000 &     1.000000 &     0.000000 &     0.000000 &     0.000000 &     1.000000 &     0.000000 &     1.000000 &     0.000000 &     0.000000 &     0.000000 &     0.000000 &     1.000000 &     1.000000 &     3.000000 &     1.000000 \\
max   &     4.000000 &     1.000000 &     1.000000 &     8.000000 &     4.000000 &     2.000000 &     4.000000 &     3.000000 &     1.000000 &     1.000000 &     1.000000 &     1.000000 &     6.000000 &     2.000000 &     7.000000 &     1.000000 \\
\bottomrule
\end{tabular}
\caption{\acsh  scenario data description. Count=7693}
\end{table*}

\end{document}


\LARGE\textbf{Appendix}\normalsize

\section{Results}
Below we summarize some findings in the other two data scenarios we explored from the \href{https://github.com/zykls/folktables}{$folktables$} project, which also rely on recent (2018) US Cencus Data. Both are data from the state of New Mexico. 
\subsection{ACSMobility}
We tested MST and SQ on the Mobility scenario, which predicts whether a person moved between residential addresses over the course of a year (only including individuals between the ages of 18 and 35, who are more likely to move).

Figures~\ref{fig:accuracymobility}-\ref{fig:fprmobility} contain results for \acst on the Real, MST and SuperQUAIL SQ synthesizers for $\epsilon \in \{ e^2, e^3, e^4\}$

Note that, at these medium/large $\epsilon$ values, the SuperQUAIL method matches the accuracy and F1 scores of the real data, whereas MST stagnates in its performance. However, the variance in performance on different subgroups of the population, and general difficulty of this scenario for predictive analysis (even the real data is skewed heavily) make it difficult to synthesize this data with any confidence in one-off fashion. This illustrates one of the limitations of DP synthetic data, when the data itself is noisy or inconsistent, the added noise can be preventatively difficult to deal with.

\begin{figure*}[t!]
\begin{subfigure}{\linewidth}
\centering
\includegraphics[width=.33\textwidth]{figures/mobility/e^2/7.38_accuracy.png}\hfill
\includegraphics[width=.33\textwidth]{figures/mobility/e^3/20.08_accuracy.png}\hfill
\includegraphics[width=.33\textwidth]{figures/mobility/e^4/54.59_accuracy.png}\hfill
\caption{Accuracy score at varying $\epsilon \in \{e^2,e^3,e^4\}$ on \acst.}
\label{fig:accuracymobility}
\end{subfigure}\par\medskip

\begin{subfigure}{\linewidth}
\centering
\includegraphics[width=.33\textwidth]{figures/mobility/e^2/7.38_f1_score.png}\hfill
\includegraphics[width=.33\textwidth]{figures/mobility/e^3/20.08_f1_score.png}\hfill
\includegraphics[width=.33\textwidth]{figures/mobility/e^4/54.59_f1_score.png}\hfill
\caption{F1 score at varying $\epsilon \in \{e^2,e^3,e^4\}$ on the \acst.}
\label{fig:f1mobility}
\end{subfigure}\par\medskip

\begin{subfigure}{\linewidth}
\centering
\includegraphics[width=.33\textwidth]{figures/mobility/e^2/7.38_false_negative_rate.png}\hfill
\includegraphics[width=.33\textwidth]{figures/mobility/e^3/20.08_false_negative_rate.png}\hfill
\includegraphics[width=.33\textwidth]{figures/mobility/e^4/54.59_false_negative_rate.png}
\hfill
\caption{False negative rate at varying $\epsilon \in \{e^2,e^3,e^4\}$ on \acst.}
\label{fig:fnrmobility}
\end{subfigure}\par\medskip

\begin{subfigure}{\linewidth}
\centering

\includegraphics[width=.33\textwidth]{figures/mobility/e^2/7.38_false_positive_rate.png}\hfill
\includegraphics[width=.33\textwidth]{figures/mobility/e^3/20.08_false_positive_rate.png}\hfill
\includegraphics[width=.33\textwidth]{figures/mobility/e^4/54.59_false_positive_rate.png}
\hfill
\caption{False positives rate at varying $\epsilon \in \{e^2,e^3,e^4\}$ on \acst.}
\label{fig:fprmobility}
\end{subfigure}\par\medskip
\end{figure*}

\subsection{ACSPublicCoverage}
We tested MST and SQ on the Public Coverage scenario, which predicts a persons by public health insurance status (has/doesn't have) - data filtered to include only individuals under the age of 65 making less than $\$30,000$. 

Figures~\ref{fig:accuracypublic}-\ref{fig:fprpublic} contain results for \acsh on the Real, MST and SuperQUAIL SQ synthesizers for $\epsilon \in \{ e^2, e^3, e^4\}$.

The \acsh  scenario exhibits similar behavior to the \acse scenario explored in the main body of the paper, although SuperQUAIL only begins to outperform MST at high privacy budgets $\epsilon \in \{ e^3, e^4\}$. The stark difference between Real data performance and the performance of the synthesizers during predictive analysis suggests that the target variables conditional dependence is noisy and sensitive to privatization.

\begin{figure*}[t!]

\begin{subfigure}{\linewidth}
\centering
\includegraphics[width=.33\textwidth]{figures/public/e^2/7.38_accuracy.png}\hfill
\includegraphics[width=.33\textwidth]{figures/public/e^3/20.08_accuracy.png}\hfill
\includegraphics[width=.33\textwidth]{figures/public/e^4/54.59_accuracy.png}\hfill
\caption{Accuracy score at varying $\epsilon \in \{e^0,e^2,e^3\}$ on \acsh.}
\label{fig:accuracypublic}
\end{subfigure}\par\medskip

\begin{subfigure}{\linewidth}
\centering
\includegraphics[width=.33\textwidth]{figures/public/e^2/7.38_f1_score.png}\hfill
\includegraphics[width=.33\textwidth]{figures/public/e^3/20.08_f1_score.png}\hfill
\includegraphics[width=.33\textwidth]{figures/public/e^4/54.59_f1_score.png}\hfill
\caption{F1 score at varying $\epsilon \in \{e^0,e^2,e^3\}$ on the \acsh.}
\label{fig:f1public}
\end{subfigure}\par\medskip

\begin{subfigure}{\linewidth}
\centering
\includegraphics[width=.33\textwidth]{figures/public/e^2/7.38_false_negative_rate.png}\hfill
\includegraphics[width=.33\textwidth]{figures/public/e^3/20.08_false_negative_rate.png}\hfill
\includegraphics[width=.33\textwidth]{figures/public/e^4/54.59_false_negative_rate.png}
\hfill
\caption{False negative rate at varying $\epsilon \in \{e^0,e^2,e^3\}$ on \acsh.}
\label{fig:fnrpublic}
\end{subfigure}\par\medskip

\begin{subfigure}{\linewidth}
\centering
\includegraphics[width=.33\textwidth]{figures/public/e^2/7.38_false_positive_rate.png}\hfill
\includegraphics[width=.33\textwidth]{figures/public/e^3/20.08_false_positive_rate.png}\hfill
\includegraphics[width=.33\textwidth]{figures/public/e^4/54.59_false_positive_rate.png}
\hfill
\caption{False positives rate at varying $\epsilon \in \{e^0,e^2,e^3\}$ on \acsh.}
\label{fig:fprpublic}
\end{subfigure}\par\medskip
\end{figure*}

\subsection{Replicability}
Note that the full evaluations, results, figure generating code, etc. is available at this anonymous github: \href{https://anonymous.4open.science/r/sq_supplementary_materials-63D6/README.md}{https://anonymous.4open.science/r/sq_supplementary_materials-63D6/README.md}

\section{Proofs}
Proofs rely on notation derived from Algorithms \textbf{1} and \textbf{2} in main body of the paper.

\begin{theorem}
DPSAGE is $\epsilon, \delta$-differentially private.
\end{theorem}
\begin{proof}
\textit{DPSAGE}
Let the embedded DPSAGE Synthesizer, $S(H_S,\epsilon_S)$, be an $(\epsilon, \delta)$-differentially private mechanism mapping $M_1: N^{|X|} \rightarrow R_1$. Let the embedded DPSAGE Classifier, $C(H_C,\epsilon_C,target)$, be an $(\epsilon, \delta)$-differentially private mechanism mapping $M_2: N^{|X|} \rightarrow R_2$. Fix $0 < \gamma < 1$. Our original budget is $B$, thus by construction:
\begin{align}\label{dp}
    B &= \gamma * \epsilon + (1-\gamma) * \epsilon \\
    \epsilon_S &= \gamma * \epsilon \\
    \epsilon_C &= (1-\gamma) * \epsilon \\
    \text{By the \textit{Standard }}&\text{\textit{Composition Theorem} [Dwork et al.,2006]} \\
    \frac{Pr[M_{C,S}(x)=(r_1,r_2)]}{Pr[M_{C,S}(y)=(r_1,r_2)]}&\geq exp(-(\epsilon_S + \epsilon_C)) \\
    \frac{Pr[M_{C,S}(x)=(r_1,r_2)]}{Pr[M_{C,S}(y)=(r_1,r_2)]}&\geq exp(-(B))
\end{align}
\end{proof}

\begin{theorem}
SuperQUAIL is $\epsilon, \delta$-differentially private.
\end{theorem}
\begin{proof}
Fix $0 < \alpha < 1$. Set $\beta > 0$. By construction, we split original $\epsilon$ budget, $$\epsilon_{DPSAGE} = \alpha * \epsilon$$ and $$\epsilon_{C} = 1 - (\alpha * \epsilon)$$ we further subdivide $$\epsilon_{C_i} = \frac{\epsilon_{C}}{\beta}$$.

The $\epsilon_{C_i}$ budget is divided among $\set{C} = \{C_1,C_2...C_{\beta}\}$ The target feature classifier budget $\{C_{f^*}\}$ is consumed during the DPSAGE step, and is thus treated separately. Therefore, the budget for the embedded DP synthesizer $\epsilon_S$ and DP Classifiers $\epsilon_{C_i}$ sums to $\epsilon$:
$$
\epsilon = \beta \epsilon_{C_i} + (\epsilon_S + \epsilon_C_{f^*})
$$
and by the \textit{Standard Composition Theorem} maintains differential privacy.

\end{proof}

\section{Note: Datasets detail}
We have included tables describing the dataset features and their distributions in detail.

\begin{table*}
\npdecimalsign{.}
\nprounddigits{2}
\begin{tabular}{ln{1}{2}n{1}{2}n{1}{2}n{1}{2}n{1}{2}n{1}{2}n{1}{2}n{1}{2}n{1}{2}n{1}{2}n{1}{2}n{1}{2}n{1}{2}n{1}{2}}
\toprule
{} &            \text{MAR} &            \text{DIS} &            \text{ESP} &            \text{CIT} &            \text{MIG} &            \text{MIL} &            \text{ANC} &       \text{NAT} &           \text{DEAR} &           \text{DEYE} &           \text{DREM} &            \text{SEX} &          \text{RAC} &            \text{ESR} \\
\midrule
mean  &       3.054052 &       1.876402 &       0.591645 &       1.921218 &       1.235560 &       3.124163 &       1.703070 &       1.260793 &       1.964730 &       1.977712 &       1.851179 &       1.507211 &       3.100853 &       0.456165 \\
std   &       1.871732 &       0.329123 &       1.640849 &       1.527332 &       0.659972 &       1.588045 &       1.063244 &       0.439068 &       0.184463 &       0.147618 &       0.475799 &       0.499949 &       2.951456 &       0.498075 \\
min   &       1.000000 &       1.000000 &       0.000000 &       1.000000 &       0.000000 &       0.000000 &       1.000000 &       1.000000 &       1.000000 &       1.000000 &       0.000000 &       1.000000 &       1.000000 &       0.000000 \\
25\%   &       1.000000 &       2.000000 &       0.000000 &       1.000000 &       1.000000 &       4.000000 &       1.000000 &       1.000000 &       2.000000 &       2.000000 &       2.000000 &       1.000000 &       1.000000 &       0.000000 \\
50\%   &       3.000000 &       2.000000 &       0.000000 &       1.000000 &       1.000000 &       4.000000 &       1.000000 &       1.000000 &       2.000000 &       2.000000 &       2.000000 &       2.000000 &       1.000000 &       0.000000 \\
75\%   &       5.000000 &       2.000000 &       0.000000 &       4.000000 &       1.000000 &       4.000000 &       2.000000 &       2.000000 &       2.000000 &       2.000000 &       2.000000 &       2.000000 &       6.000000 &       1.000000 \\
max   &       5.000000 &       2.000000 &       8.000000 &       5.000000 &       3.000000 &       4.000000 &       4.000000 &       2.000000 &       2.000000 &       2.000000 &       2.000000 &       2.000000 &       9.000000 &       1.000000 \\
\bottomrule
\end{tabular}
\caption{\acse  scenario data description. Count=378817}
\end{table*}

\begin{table*}
\npdecimalsign{.}
\nprounddigits{2}
\begin{tabular}{ln{1}{2}n{1}{2}n{1}{2}n{1}{2}n{1}{2}n{1}{2}n{1}{2}n{1}{2}n{1}{2}n{1}{2}n{1}{2}n{1}{2}n{1}{2}n{1}{2}}
\toprule
{} &          \text{MAR} &          \text{SEX} &          \text{DIS} &          \text{CIT} &          \text{MIL} &          \text{ANC} &     \text{NAT} &         \text{DEAR} &         \text{DEYE} &         \text{DREM} &        \text{RAC} &          \text{COW} &          \text{ESR} &          \text{MIG} \\
\midrule
mean  &     0.417369 &     0.482719 &     0.923613 &     0.156550 &     0.164392 &     0.989834 &     0.079872 &     0.986930 &     0.015974 &     0.046181 &     1.886146 &     1.569852 &     0.939588 &     0.749927 \\
std   &     0.628410 &     0.499774 &     0.265655 &     0.565988 &     0.610520 &     0.630119 &     0.271135 &     0.113591 &     0.125395 &     0.209907 &     1.602345 &     2.116038 &     0.961102 &     0.433118 \\
min   &     0.000000 &     0.000000 &     0.000000 &     0.000000 &     0.000000 &     0.000000 &     0.000000 &     0.000000 &     0.000000 &     0.000000 &     0.000000 &     0.000000 &     0.000000 &     0.000000 \\
25\%   &     0.000000 &     0.000000 &     1.000000 &     0.000000 &     0.000000 &     1.000000 &     0.000000 &     1.000000 &     0.000000 &     0.000000 &     1.000000 &     0.000000 &     0.000000 &     0.500000 \\
50\%   &     0.000000 &     0.000000 &     1.000000 &     0.000000 &     0.000000 &     1.000000 &     0.000000 &     1.000000 &     0.000000 &     0.000000 &     1.000000 &     0.000000 &     1.000000 &     1.000000 \\
75\%   &     1.000000 &     1.000000 &     1.000000 &     0.000000 &     0.000000 &     1.000000 &     0.000000 &     1.000000 &     0.000000 &     0.000000 &     2.000000 &     3.000000 &     1.000000 &     1.000000 \\
max   &     4.000000 &     1.000000 &     1.000000 &     4.000000 &     3.000000 &     3.000000 &     1.000000 &     1.000000 &     1.000000 &     1.000000 &     7.000000 &     9.000000 &     5.000000 &     1.000000 \\
\bottomrule
\end{tabular}
\caption{\acst  scenario data description. Count=3443}
\end{table*}

\begin{table*}
\npdecimalsign{.}
\nprounddigits{2}
\begin{tabular}{ln{1}{2}n{1}{2}n{1}{2}n{1}{2}n{1}{2}n{1}{2}n{1}{2}n{1}{2}n{1}{2}n{1}{2}n{1}{2}n{1}{2}n{1}{2}n{1}{2}n{1}{2}n{1}{2}}
\toprule
{} &          \text{MAR} &          \text{SEX} &          \text{DIS} &          \text{ESP} &          \text{CIT} &          \text{MIG} &          \text{MIL} &          \text{ANC} &     \text{NAT} &         \text{DER} &         \text{DEY} &         \text{DRM} &          \text{ESR} &          \text{FER} &        \text{RAC} &       \text{PUB} \\
\midrule
mean  &     0.793709 &     0.542181 &     0.160666 &     0.216431 &     0.192903 &     0.851553 &     0.295333 &     0.452359 &     0.098271 &     0.032497 &     0.036267 &     0.072923 &     0.722215 &     0.388535 &     1.488236 &     0.466138 \\
std   &     1.033853 &     0.498250 &     0.367246 &     0.908062 &     0.633373 &     0.376175 &     0.876026 &     0.762296 &     0.297700 &     0.177328 &     0.186965 &     0.260028 &     0.922460 &     0.525688 &     2.122980 &     0.498884 \\
min   &     0.000000 &     0.000000 &     0.000000 &     0.000000 &     0.000000 &     0.000000 &     0.000000 &     0.000000 &     0.000000 &     0.000000 &     0.000000 &     0.000000 &     0.000000 &     0.000000 &     0.000000 &     0.000000 \\
25\%   &     0.000000 &     0.000000 &     0.000000 &     0.000000 &     0.000000 &     1.000000 &     0.000000 &     0.000000 &     0.000000 &     0.000000 &     0.000000 &     0.000000 &     0.000000 &     0.000000 &     0.000000 &     0.000000 \\
50\%   &     1.000000 &     1.000000 &     0.000000 &     0.000000 &     0.000000 &     1.000000 &     0.000000 &     0.000000 &     0.000000 &     0.000000 &     0.000000 &     0.000000 &     1.000000 &     0.000000 &     0.000000 &     0.000000 \\
75\%   &     1.000000 &     1.000000 &     0.000000 &     0.000000 &     0.000000 &     1.000000 &     0.000000 &     1.000000 &     0.000000 &     0.000000 &     0.000000 &     0.000000 &     1.000000 &     1.000000 &     3.000000 &     1.000000 \\
max   &     4.000000 &     1.000000 &     1.000000 &     8.000000 &     4.000000 &     2.000000 &     4.000000 &     3.000000 &     1.000000 &     1.000000 &     1.000000 &     1.000000 &     6.000000 &     2.000000 &     7.000000 &     1.000000 \\
\bottomrule
\end{tabular}
\caption{\acsh  scenario data description. Count=7693}
\end{table*}